\journalname{Journal}
\newcommand{\assign}{:=}
\newcommand{\cdummy}{\cdot}
\newcommand{\nocomma}{}
\newcommand{\tmem}[1]{{\em #1\/}}
\newcommand{\tmop}[1]{\ensuremath{\operatorname{#1}}}
\newcommand{\tmstrong}[1]{\textbf{#1}}
\newcommand{\tmtextit}[1]{{\itshape{#1}}}
\newcommand{\mathbbm}{\mathbb}
\newcommand{\cont}{\nonumber}
\newcommand{\mcont}{\cdot\cont}
\newcommand{\qq}{\quad\quad}
\newcommand{\qqq}{\quad\quad\quad}
\newcommand{\qqqq}{\quad\quad\quad\quad}
\newcommand{\nn}{\nonumber}
\begin{document}

\title{Optimality Bounds for a Variational Relaxation of the Image Partitioning Problem
}


\author{Jan Lellmann \and Frank Lenzen \and Christoph Schn{\"o}rr}


\institute{J.~Lellmann
\at Image and Pattern Analysis Group \& HCI\\
  Dept.~of Mathematics and Computer Science, University of Heidelberg
  \at \emph{Current Address:} Dept.~of Applied Mathematics and Theoretical Physics, University of Cambridge, United Kingdom\\
\email{j.lellmann@damtp.cam.ac.uk}
\and
F.~Lenzen \and C.~Schn{\"o}rr
\at Image and Pattern Analysis Group \& HCI\\
  Dept.~of Mathematics and Computer Science\\
   University of Heidelberg\\
  \email{lenzen@iwr.uni-heidelberg.de, schnoerr@math.uni-heidelberg.de}
}

\date{Received: date / Accepted: date}

\maketitle

\begin{abstract}
  We consider a variational convex relaxation of a class of optimal partitioning and multiclass labeling problems,
  which has recently proven quite successful and can be seen as a continuous analogue of Linear Programming (LP)
  relaxation methods for
  finite-dimensional problems. While for the latter case several optimality bounds are known, to our knowledge
  no such bounds exist in the continuous setting. We provide such a bound by analyzing a probabilistic rounding
  method, showing that it is possible to obtain an integral solution of the original partitioning problem
  from a solution of the relaxed problem with an \emph{a priori} upper bound on the objective, ensuring the
  quality of the result from the viewpoint of optimization. The approach has a natural interpretation as an
  approximate, multiclass variant of the celebrated coarea formula.
\keywords{Convex Relaxation \and Multiclass Labeling \and Approximation Bound \and Combinatorial Optimization \and Total Variation \and Linear Programming Relaxation}
\end{abstract}


\section{Introduction and Background}

\subsection{Convex Relaxations of Partitioning Problems}
In this work, we will be concerned with a class of {\tmem{variational}}
problems used in image processing and analysis for formulating multiclass
image partitioning problems, which are of the form
\begin{eqnarray}
  \inf_{u \in \mathcal{C}_{\mathcal{E}}} f (u) & \assign &
  \int_{\Omega} \langle u (x), s (x) \rangle dx + \int_{\Omega} d \Psi (D u)
  \hspace{0.25em},  \label{eq:combprob}\\
  \mathcal{C}_{\mathcal{E}} & \assign & \tmop{BV} (\Omega, \mathcal{E}) \\
  & = & \{u \in \tmop{BV} (\Omega)^l |u (x) \in \mathcal{E} \text{\tmop{ for} a.e. ~}
  x \in \Omega\}, \\
  \mathcal{E} & \assign & \{e^1, \ldots, e^l \} .
\end{eqnarray}
The {\tmem{labeling function}} $u : \Omega \rightarrow \mathbbm{R}^l$ assigns
to each point in the image domain $\Omega \assign (0, 1)^d$ a label $i \in
\mathcal{I} \assign \{1, \ldots, l\}$, which is represented by one of the
$l$-dimensional unit vectors $e^1,\ldots,e^l$. Since it is piecewise constant and therefore cannot be
assumed to be differentiable, the problem is formulated as a
{\tmem{free discontinuity problem}} in the space $\tmop{BV} (\Omega,
\mathcal{E})$ of functions of bounded variation; we refer to
{\cite{Ambrosio2000}} for a general overview.

The objective function $f$ consists of a data term and a regularizer. The data term is
given in terms of the $L^1$ function $s (x) = (s_1 (x), \ldots, s_l (x)) \in
\mathbbm{R}^l$, and assigns to the choice $u (x) = e^i$ the ``penalty'' $s_i
(x)$, in the sense that
\begin{eqnarray}
  \int_{\Omega} \langle u (x), s (x) \rangle d x & = & \sum_{i = 1}^l
  \int_{\Omega_i} s_i (x) d x, 
\end{eqnarray}
where $\Omega_i \assign u^{- 1} (\{e^i \}) = \{x \in \Omega |u (x) = e^i \}$
is the {\tmem{class region}} for label $i$, i.e.,~the set of points that are
assigned the $i$-th label. The data term generally depends on the input data
-- such as color values of a recorded image, depth measurements, or other
features -- and promotes a good fit of the minimizer to the input data. While
it is purely local, there are no further restrictions such as continuity,
convexity etc., therefore it covers many interesting applications such as
segmentation, multi-view reconstruction, stitching, and inpainting
{\cite{Paragios2006}}.

\subsection{Convex Regularizers}
The {\tmem{regularizer}} is defined by the positively homogeneous, continuous
and convex function $\Psi : \mathbbm{R}^{d \times l} \rightarrow
\mathbbm{R}_{\geqslant 0}$ acting on the distributional derivative $D u$ of
$u$, and incorporates additional prior knowledge about the ``typical''
appearance of the desired output. For piecewise constant $u$, it can be seen
that the definition in (\ref{eq:combprob}) amounts to a weighted
penalization of the {\tmem{discontinuities}} of $u$:
\begin{eqnarray}
  & & \int_{\Omega} d \Psi (D u) = \\ \label{eq:dpsiduju}
  & & \quad \int_{J_u} \Psi (\nu_u (x) (u^+ (x) - u^-
  (x))^{\top}) d\mathcal{H}^{d - 1} (x),\nonumber
\end{eqnarray}
where $J_u$ is the jump set of $u$, i.e.,~the set of points where $u$ has
well-defined right-hand and left-hand limits $u^+$ and $u^-$ and (in an
infinitesimal sense) jumps between the values $u^+ (x), u^- (x) \in
\mathbbm{R}^l$ across a hyperplane with normal $\nu_u (x) \in \mathbbm{R}^d$,
$\| \nu_u (x)\|_2 = 1$ (see {\cite{Ambrosio2000}} for the precise definitions).

A particular case is to set $\Psi = (1 / \sqrt{2}) \| \cdot \|_2$, i.e.,~the
scaled Frobenius norm. In this case $J (u)$ is just the (scaled) total
variation of $u$, and, since $u^+ (x)$ and $u^- (x)$ assume values in
$\mathcal{E}$ and cannot be equal on the jump set $J_u$, it holds that
\begin{eqnarray}
  J (u) & = & \frac{1}{\sqrt{2}} \int_{J_u} \|u^+ (x) - u^- (x)\|_2
  d\mathcal{H}^{d - 1} (x),  \label{eq:jusimple}\\
  & = & \mathcal{H}^{d - 1} (J_u) . 
\end{eqnarray}
Therefore, for $\Psi = (1 / \sqrt{2}) \| \cdot \|_2$ the regularizer just
amounts to penalizing the total length of the interfaces between class regions
as measured by the $(d - 1)$-dimensional Hausdorff measure $\mathcal{H}^{d -
1}$, which is known as {\tmem{uniform metric}} or {\tmem{Potts}} regularizer.

A general regularizer was proposed in {\cite{Lellmann2011}}, based on~{\cite{Chambolle2008}}: Given a metric ({\tmem{distance}}) $d : \{1,
\ldots, l\}^2 \rightarrow \mathbbm{R}_{\geqslant 0}$ (not to be confused with the ambient space dimension), define
\begin{eqnarray}
  & & \Psi_d (z = (z^1, \ldots, z^l)) \assign \sup_{v \in
  \mathcal{D}_{\tmop{loc}}^d}  \langle z, v \rangle, \label{eq:chambreg}\\
  & & \mathcal{D}_{\tmop{loc}}^d \assign \nocomma \{ \left( v^1, \ldots, v^l
  \right) \in \mathbbm{R}^{d \times l} | \ldots\\
  & & \quad \|v^i - v^j \|_2 \leqslant d (i, j)
  \forall i, j \in \{1, \ldots, l\},\ldots\nn\\
  & & \quad \sum_{k = 1}^l v^k = 0\} .\nn
\end{eqnarray}
It was then shown that
\begin{eqnarray}
  \Psi_d (\nu (e^j - e^i)^{\top}) & = & d (i, j),  \label{eq:psidsatisfies}
\end{eqnarray}
therefore in view of (\ref{eq:dpsiduju}) the corresponding regularizer is
{\tmem{non-uniform}}: the boundary between the class regions $\Omega_i$ and
$\Omega_j$ is penalized by its length, multiplied by the weight $d (i, j)$
{\tmem{depending on the labels of both regions}}.

However, even for the comparably simple regularizer (\ref{eq:jusimple}), the
model (\ref{eq:combprob}) is a (spatially continuous)
{\tmem{combinatorial}} problem due to the integral nature of the constraint
set $\mathcal{C}_{\mathcal{E}}$, therefore optimization is nontrivial. In the
context of multiclass image partitioning, a first approach can be found in
{\cite{Lysaker2006}}, where the problem was posed in a level set-formulation
in terms of a labeling function $\phi : \Omega \rightarrow \left\{ 1, \ldots,
l \right\}$, which is subsequently relaxed to~$\mathbbm{R}$. Then $\phi$ is
replaced by polynomials in $\phi$, which coincide with the indicator functions
$u_i$ for the case where $\phi$ assumes integral values. However, the
numerical approach involves several nonlinearities and requires to solve a
sequence of nontrivial subproblems.

In contrast, representation (\ref{eq:combprob}) directly suggests a
more straightforward relaxation to a convex problem: replace $\mathcal{E}$ by
its convex hull, which is just the unit simplex in $l$ dimensions,
\begin{eqnarray}
  \Delta_l & \assign & \tmop{conv} \{e^1, \ldots, e^l \}\\
  & = & \{a \in \mathbbm{R}^l |a \geqslant 0, \sum_{i = 1}^l a_i = 1\},\nonumber
\end{eqnarray}
and solve the {\tmem{relaxed}} problem
\begin{eqnarray}
  \inf_{u \in \mathcal{C}} & f (u) & ,\label{eq:problemrelaxed}\\
  \mathcal{C} & \assign & \tmop{BV} (\Omega, \Delta_l) \\
  & = & \{u \in \tmop{BV} (\Omega)^l |u (x) \in \Delta_l \text{\tmop{~for} a.e.~} x \in \Omega\} \hspace{0.25em} .
\end{eqnarray}
Sparked by a series of papers {\cite{Zach2008,Chambolle2008,Lellmann2009}},
recently there has been much interest in problems of this form, since they --
although generally nonsmooth -- are convex and therefore can be solved to
global optimality, e.g., using primal-dual techniques. The approach has proven
useful for a wide range of applications \cite{Kolev2009,Goldstein2009a,Delaunoy2009,Yuan2010}.

\subsection{Finite-Dimensional vs.~Continuous Approaches}
Many of these applications have been tackled before in a finite-dimensional
setting, where they can be formulated as combinatorial problems on a grid
graph, and solved using combinatorial optimization methods such as
$\alpha$-expansion and related integer linear programming (ILP) methods
{\cite{Boykov2001,Komodakis2007}}. These methods have been shown to yield an
integral labeling $u' \in \mathcal{C}_{\mathcal{E}}$ with the a priori bound
\begin{eqnarray}
  f (u') & \leqslant & 2 \frac{\max_{i \neq j} d (i, j)}{\min_{i \neq j} d (i,
  j)} f (u^{\ast}_{\mathcal{E}}),  \label{eq:fuprime}
\end{eqnarray}
where $u^{\ast}_{\mathcal{E}}$ is the (unknown) solution of the integral
problem (\ref{eq:combprob}). They therefore permit to compute a suboptimal
solution to the -- originally NP-hard {\cite{Boykov2001}} -- combinatorial
problem with an {\tmem{upper bound}} on the objective. No such bound is yet
available for methods based on the spatially continuous problem
(\ref{eq:problemrelaxed}).

Despite these strong theoretical and practical results available for the
finite-dimensional combinatorial energies, the function-based, spatially
continuous formulation (\ref{eq:combprob}) has several unique
advantages:
\begin{itemize}
  \item The energy (\ref{eq:combprob}) is truly isotropic, in the
  sense that for a proper choice of $\Psi$ it is invariant under rotation of
  the coordinate system. Pursuing finite-dimensional ``discretize-first''
  approaches generally introduces artifacts due to the inherent anisotropy,
  which can only be avoided by increasing the neighborhood size, thereby
  reducing sparsity and severely slowing down the graph cut-based methods.
  
  In contrast, properly discretizing the {\tmem{relaxed}} problem
  (\ref{eq:problemrelaxed}) and solving it as a {\tmem{convex}} problem with
  subsequent thresholding yields much better results without compromising
  sparsity (Fig.~\ref{fig:discretization-4nb}
  and~\ref{fig:discretization-iso}, {\cite{Klodt2008}})
  \begin{figure}[tf]
\centering     
    \resizebox{.42\columnwidth}{!}{\includegraphics{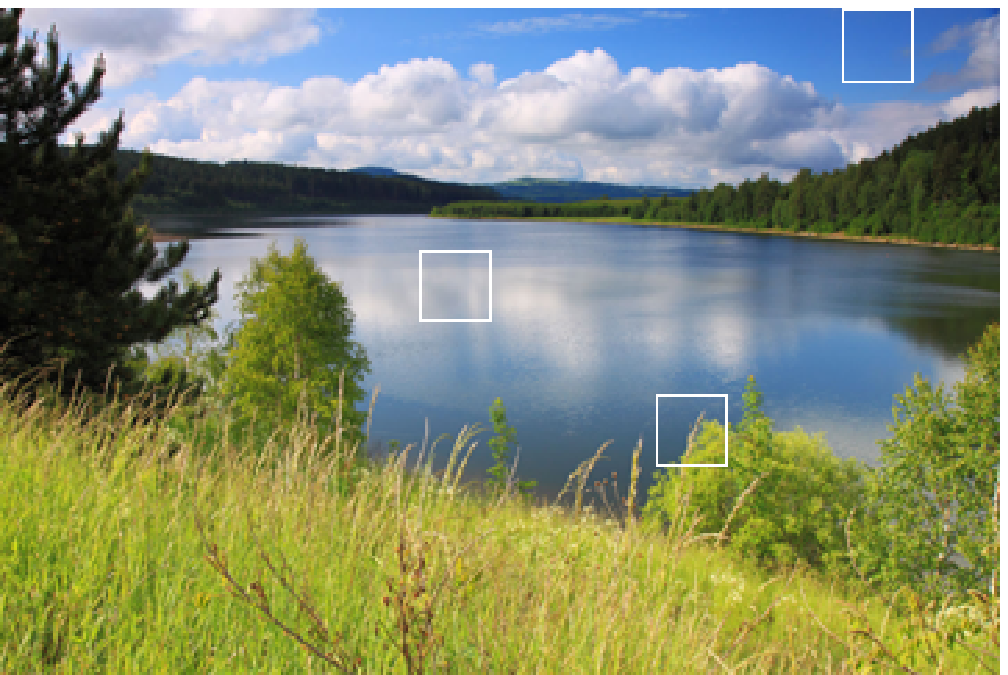}}
    \resizebox{.42\columnwidth}{!}{\includegraphics{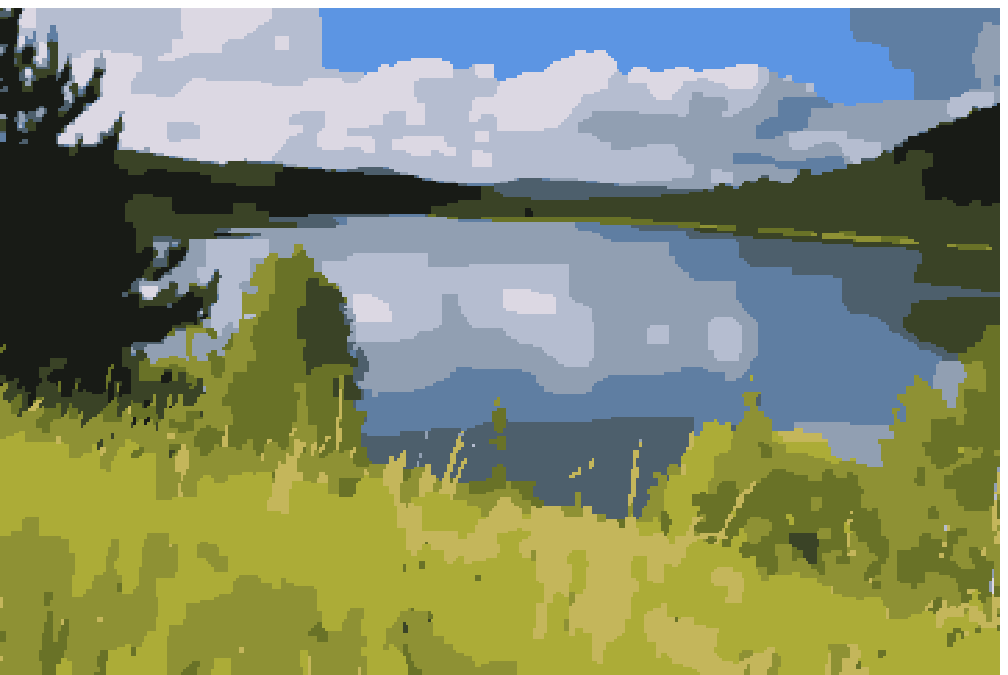}}
    
    \resizebox{.14\columnwidth}{!}{\includegraphics{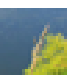}}\resizebox{.14\columnwidth}{!}{\includegraphics{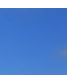}}\resizebox{.14\columnwidth}{!}{\includegraphics{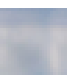}}
 \resizebox{.14\columnwidth}{!}{\includegraphics{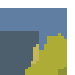}}\resizebox{.14\columnwidth}{!}{\includegraphics{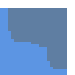}}\resizebox{.14\columnwidth}{!}{\includegraphics{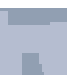}}
    \caption{\label{fig:discretization-4nb}Segmentation of an image into $12$
    classes using a combinatorial method. {\tmstrong{Left:}} Input image,
    {\tmstrong{Right:}} Result obtained by solving a
    {\tmem{combinatorial}} discretized problem with $4$-neighborhood. The
    bottom row shows detailed views of the marked parts of the image. The
    minimizer of the combinatorial problem exhibits blocky artifacts caused by
    the choice of discretization.}
  \end{figure}.\begin{figure}[tf]
  \centering
    \resizebox{.42\columnwidth}{!}{\includegraphics{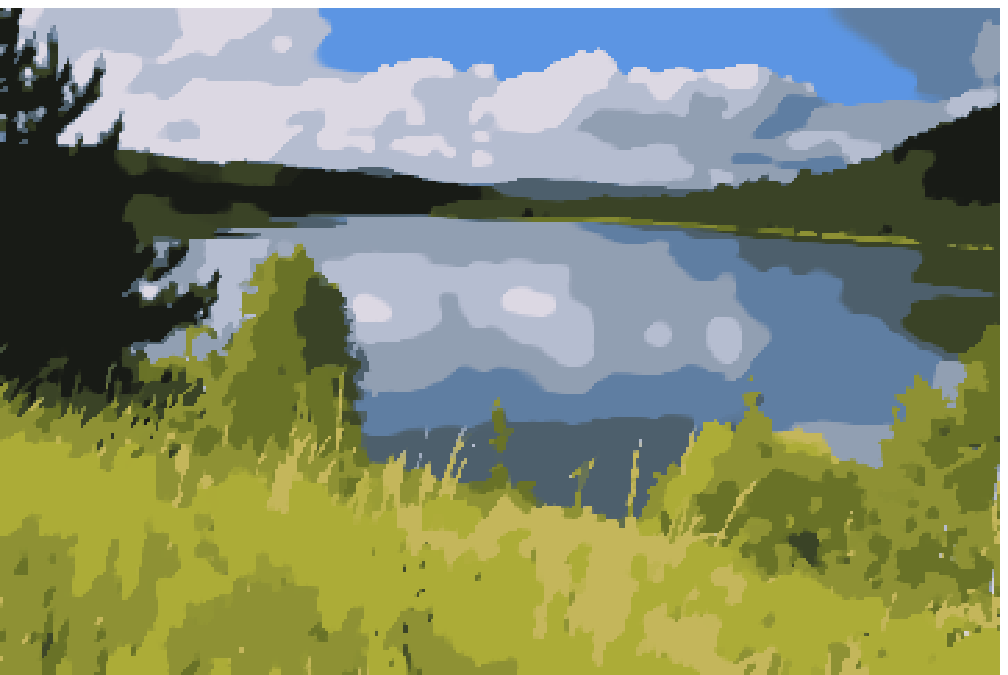}}
    \resizebox{.42\columnwidth}{!}{\includegraphics{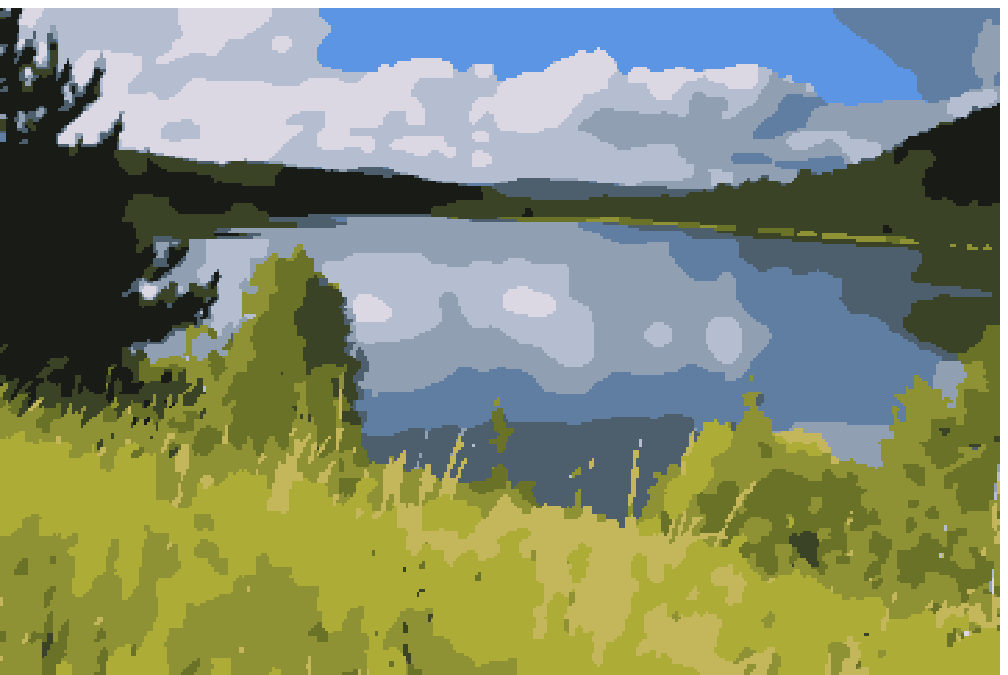}}
    
    \resizebox{.14\columnwidth}{!}{\includegraphics{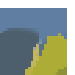}}\resizebox{.14\columnwidth}{!}{\includegraphics{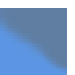}}\resizebox{.14\columnwidth}{!}{\includegraphics{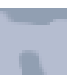}}
    \resizebox{.14\columnwidth}{!}{\includegraphics{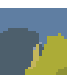}}\resizebox{.14\columnwidth}{!}{\includegraphics{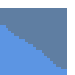}}\resizebox{.14\columnwidth}{!}{\includegraphics{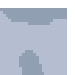}}
    \caption{\label{fig:discretization-iso}Segmentation obtained by solving a
    finite-differences discretization of the {\tmem{relaxed}} spatially continuous problem.
    {\tmstrong{Left:}} Non-integral solution obtained as a minimizer of the
    discretized relaxed problem. {\tmstrong{Right:}} Integral labeling
    obtained by rounding the fractional labels in the solution of the relaxed
    problem to the nearest integral label. The rounded result contains almost
    no structural artifacts.}
  \end{figure} This can be attributed to the fact that solving the discretized
  problem as a {\tmem{combinatorial}} problem in effect discards much of the
  information about the problem structure that is contained in the nonlinear
  terms of the discretized objective.
  
  \item Present combinatorial optimization methods
  {\cite{Boykov2001,Komodakis2007}} are inherently sequential and difficult to
  parallelize. On the other hand, parallelizing primal-dual methods for
  solving the relaxed problem \eqref{eq:problemrelaxed} is straight-forward,
  and GPU implementations have been shown to outperform state-of-the-art graph
  cut methods~\cite{Zach2008}.
  
  \item Analyzing the problem in a fully functional-analytic setting gives
  valuable insight into the problem structure, and is of theoretical interest
  in itself.
\end{itemize}

\subsection{Optimality Bounds}
However, one possible drawback of the spatially continuous approach is that
the solution of the relaxed problem (\ref{eq:problemrelaxed}) may assume
{\tmem{fractional}} values, i.e.,~values in $\Delta_l \setminus \mathcal{E}$.
Therefore, in applications that require a true partition of $\Omega$, some
rounding process is needed in order to generate an integral labeling
$\bar{u}^{\ast}$. This may increase the objective, and lead to a
suboptimal solution of the original problem~(\ref{eq:combprob}).

The regularizer $\Psi_d$ as defined in (\ref{eq:chambreg}) enjoys the property
that it majorizes all other regularizers that can be written in integral form
and satisfy (\ref{eq:psidsatisfies}). Therefore it is in a sense ``optimal'',
since it introduces as few fractional solutions as possible. In practice, this
forces solutions of the relaxed problem to assume integral values in most
points, and rounding is in practice only required in small regions.

However, the rounding step may still increase the objective and generate
suboptimal integral solutions. Therefore the question arises whether this
approach allows to recover ``good'' integral solutions of the original problem
(\ref{eq:combprob}).

In the following, we are concerned with the question whether it is possible to
obtain, using the convex relaxation (\ref{eq:problemrelaxed}), {\tmem{integral}}
solutions with an upper bound on the objective. Specifically, we focus
on inequalities of the form
\begin{eqnarray}
  f ( \bar{u}^{\ast}) & \leqslant & (1 + \varepsilon) f
  (u_{\mathcal{E}}^{\ast})  \label{eq:subopteps}
\end{eqnarray}
for some constant $\varepsilon > 0$, which provide an {\tmem{upper bound on
the objective}} of the \tmtextit{rounded} integral solution $\bar{u}^{\ast}$
with respect to the objective of the (unknown) {\tmem{optimal}} integral
solution $u_{\mathcal{E}}^{\ast}$ of (\ref{eq:combprob}). Note that generally
it is not possible to show that (\ref{eq:subopteps}) holds for
{\tmem{any}} $\varepsilon > 0$. The reverse inequality
\begin{eqnarray}
  f (u_{\mathcal{E}}^{\ast}) \leqslant f ( \bar{u}^{\ast}) 
\end{eqnarray}
always holds since $\bar{u}^{\ast} \in \mathcal{C}_{\mathcal{E}}$ and
$u_{\mathcal{E}}^{\ast}$ is an optimal integral solution. The specific form
(\ref{eq:subopteps}) can be attributed to the alternative interpretation
\begin{eqnarray}
  \frac{f ( \bar{u}^{\ast}) - f (u_{\mathcal{E}}^{\ast})}{f
  (u_{\mathcal{E}}^{\ast})} & \leqslant & \varepsilon, 
  \label{eq:fubarboundalt}
\end{eqnarray}
which provides a bound for the relative gap to the optimal objective of the
combinatorial problem. Such $\varepsilon$ can be obtained {\tmem{a
posteriori}} by actually computing (or approximating)~$\bar{u}^{\ast}$ and a
dual feasible point: Assume that a feasible primal-dual pair $\left( u, v
\right) \in \mathcal{C} \times \mathcal{D}$ is known, where $u$ approximates
$u^{\ast}$, and assume that some integral feasible $\bar{u} \in
\mathcal{C}_{\mathcal{E}}$ has been obtained from $u$ by a rounding process.
Then the pair $\left( \bar{u}, v \right)$ is feasible as well since
$\mathcal{C}_{\mathcal{E}} \subseteq \mathcal{C}$, and we obtain an {\tmem{a posteriori}}
optimality bound of the form (\ref{eq:fubarboundalt}) with respect to the
optimal \tmtextit{integral} solution $u^{\ast}_{\mathcal{E}}$:
\begin{eqnarray}
  & \frac{f ( \bar{u}) - f_D (u_{\mathcal{E}}^{\ast})}{f_D
  (u_{\mathcal{E}}^{\ast})} \leqslant \frac{f ( \bar{u}) - f_D
  (u_{\mathcal{E}}^{\ast})}{f_D (v)} \leqslant \frac{f ( \bar{u}) - f_D
  (v)}{f_D (v)} = : \varepsilon' \hspace{0.25em} . &  \label{eq:apost}
\end{eqnarray}
However, this requires that the the primal and dual objectives $f$ and $f_D$
can be accurately evaluated, and requires to compute a minimizer of the
problem for the specific input data, which is generally difficult, especially
in the spatially continuous formulation.

In contrast, true {\tmem{a priori}} bounds do not require knowledge of a
solution and apply uniformly to all problems of a class, irrespective of the
particular input. When considering rounding methods, one generally has to
discriminate between
\begin{itemize}
  \item {\tmem{deterministic}} vs.~{\tmem{probabilistic}} methods, and
  
  \item {\tmem{spatially discrete (finite-dimensional)}} vs.~{\tmem{spatially
  continuous}} methods.
\end{itemize}
Most known {\tmem{a priori}} approximation results only hold in the
finite-dimensional setting, and are usually proven using graph-based pairwise
formulations. In contrast, we assume an ``optimize first'' perspective due to
the reasons outlined in the introduction. Unfortunately, the proofs for the
finite-dimensional results often rely on pointwise arguments that cannot
directly be transferred to the continuous setting. Deriving similar results
for continuous problems therefore requires considerable additional work.

\subsection{Contribution and Main Results}
In this work we prove that using the regularizer
(\ref{eq:chambreg}), the {\tmem{a priori}} bound (\ref{eq:fuprime}) can be
carried over to the spatially continuous setting. Preliminary versions of
these results with excerpts of the proofs have been announced as conference
proceedings {\cite{Lellmann2011}}. We extend these results to provide the
exact bound (\ref{eq:fuprime}), and supply the full proofs.

As the main result, we show that it is possible to construct a rounding method
parametrized by a parameter $\gamma \in \Gamma$, where $\Gamma$ is an
appropriate parameter space:
\begin{eqnarray}
  & R : & \mathcal{C} \times \Gamma \rightarrow \mathcal{C}_{\mathcal{E}}, \\
  &  & u \in \mathcal{C} \mapsto \bar{u}_{\gamma} \assign R_{\gamma} (u) \in
  \mathcal{C}_{\mathcal{E}},
\end{eqnarray}
such that for a suitable probability distribution on $\Gamma$, the following
theorem holds for the expectation $\mathbbm{E}f ( \bar{u}) \assign
\mathbbm{E}_{\gamma} f ( \bar{u}_{\gamma})$:

\begin{theorem}
  \label{thm:mainthm}Let $u \in \mathcal{C}$, $s \in L^1 (\Omega)^l$, $s
  \geqslant 0$, and let $\Psi : \mathbbm{R}^{d \times l} \rightarrow
  \mathbbm{R}_{\geqslant 0}$ be positively homogeneous, convex and continuous.
  Assume there exists a lower bound $\lambda_l > 0$ such that, for $z = (z^1, \ldots, z^l)$,
  \begin{eqnarray}
    \Psi (z) & \geqslant & \lambda_l  \frac{1}{2} 
    \sum_{i = 1}^l \|z^i \|_2 \hspace{1em} \forall z \in \mathbbm{R}^{d \times
    l}, \sum_{i = 1}^l z^i = 0.  \label{eq:psilower}
  \end{eqnarray}
  Moreover, assume there exists an upper bound $\lambda_u < \infty$ such that, for every $\nu \in \mathbbm{R}^d satisfying  \| \nu \|_2 = 1$,
  \begin{eqnarray}
    \Psi (\nu (e^i - e^j)^{\top}) & \leqslant & \lambda_u \quad \forall i, j \in \{1, \ldots, l\}\,.\label{eq:psiupper}
  \end{eqnarray}
  Then Alg.~1 (see below) generates an integral
  labeling $\bar{u} \in \mathcal{C}_{\mathcal{E}}$ almost surely, and
  \begin{eqnarray}
    \mathbbm{E} f ( \bar{u}) & \leqslant & 2 \frac{\lambda_u}{\lambda_l} f (u)
    .  \label{eq:efubaroptimality}
  \end{eqnarray}
\end{theorem}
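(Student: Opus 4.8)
The plan is to analyze the rounding map $R_\gamma$ by decomposing the objective $f=f_{\mathrm{data}}+f_{\mathrm{reg}}$ and bounding the expected increase of each part separately against the corresponding part of $f(u)$. Since $\bar u_\gamma$ takes values in $\mathcal{E}$ almost surely (which I would verify first from the construction of Alg.~1, essentially because the parameter $\gamma$ picks out, for a.e.\ $x$, a single vertex of $\Delta_l$), the claim $\bar u\in\mathcal{C}_{\mathcal{E}}$ follows once one checks that $\bar u_\gamma$ inherits the $\mathrm{BV}$ property; the latter should come from the coarea-type structure of the rounding combined with the finiteness of $\mathbbm{E}f(\bar u)$, which forces $\bar u\in\mathrm{BV}$ a.s.

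For the \emph{data term}, the key is that the rounding be \emph{label-preserving in expectation}: I would arrange (or read off from Alg.~1) that $\mathbbm{E}_\gamma\,\mathbf 1_{\{\bar u_\gamma(x)=e^i\}}=u_i(x)$ for a.e.\ $x$, so that by Fubini
\begin{eqnarray}
  \mathbbm{E}\int_\Omega \langle \bar u_\gamma(x),s(x)\rangle\,dx
  &=& \sum_{i=1}^l\int_\Omega u_i(x)s_i(x)\,dx
  \;=\; \int_\Omega\langle u(x),s(x)\rangle\,dx,\nonumber
\end{eqnarray}
using $s\ge 0$ and $s\in L^1$ to justify interchanging expectation and integral. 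Thus the data term is reproduced \emph{exactly} in expectation, contributing a factor $1$, which is dominated by $2\lambda_u/\lambda_l\ge 1$ (note $\lambda_l\le\lambda_u$ since $\tfrac12\sum_i\|e^i-e^j\|_2=1$ for the test element $z=\nu(e^i-e^j)^\top$).

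The \emph{regularizer} is the heart of the matter and the main obstacle. Here I would pass through the layer-cake / coarea representation: express $u$ in terms of its superlevel sets and show that $R_\gamma$ acts, roughly, by thresholding these sets at a random level, so that the jump set of $\bar u_\gamma$ is assembled from parts of the reduced boundaries of the superlevel sets of the $u_i$. On such a piece the jump of $\bar u_\gamma$ is of the form $\nu(e^i-e^j)^\top$, whose $\Psi$-cost is bounded above by $\lambda_u$ via hypothesis \eqref{eq:psiupper}; summing (integrating) over the random level and over $x$ gives
\begin{eqnarray}
  \mathbbm{E}\int_\Omega d\Psi(D\bar u_\gamma)
  &\le& \lambda_u\,\mathbbm{E}\,\mathcal{H}^{d-1}(J_{\bar u_\gamma})
  \;\le\; \lambda_u\cdot\frac{1}{\lambda_l}\cdot 2\cdot\frac{1}{2}\,\mathbbm{E}\!\int_\Omega\sum_{i=1}^l |D u_i|,\nonumber
\end{eqnarray}
where the second inequality is the multiclass coarea estimate controlling the expected total interface length by the total variation of the components, and the last expression is in turn $\le \tfrac{1}{\lambda_l}\int_\Omega d\Psi(Du)$ by the lower bound \eqref{eq:psilower} applied pointwise to $Du$ (whose rows sum to zero since $\sum_i u_i\equiv 1$). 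The delicate points I expect to fight with are: making the coarea/layer-cake manipulation rigorous in $\mathrm{BV}(\Omega,\Delta_l)$ (measurability of $\gamma\mapsto\bar u_\gamma$, applicability of the coarea formula to each $u_i$, and handling the diffuse and jump parts of $Du$ simultaneously), and ensuring the per-level jump really only ever connects two labels so that \eqref{eq:psiupper} applies rather than a worse estimate.

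Combining the two parts, $\mathbbm{E}f(\bar u)=\mathbbm{E}f_{\mathrm{data}}(\bar u)+\mathbbm{E}f_{\mathrm{reg}}(\bar u)\le f_{\mathrm{data}}(u)+\tfrac{\lambda_u}{\lambda_l}f_{\mathrm{reg}}(u)\le 2\tfrac{\lambda_u}{\lambda_l}f(u)$, which is \eqref{eq:efubaroptimality}; the almost-sure integrality and $\mathrm{BV}$-regularity of $\bar u$ then follow since a finite expectation forces $f(\bar u)<\infty$ a.s., hence $\bar u\in\mathrm{BV}(\Omega,\mathcal{E})=\mathcal{C}_{\mathcal{E}}$ a.s.
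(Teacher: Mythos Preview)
Your overall architecture (split $f$ into data and regularizer, show the data term is reproduced exactly in expectation, bound the expected regularizer by a constant times $\int_\Omega d\Psi(Du)$) is the same as the paper's. The data-term argument is essentially correct and matches the paper's Prop.~\ref{prop:euks}; the pointwise identity $\mathbbm{E}_\gamma 1_{\{\bar u_\gamma(x)=e^i\}}=u_i(x)$ does hold for Alg.~1 (it follows from the geometric-series computation $\sum_{k\ge 1}((l-1)/l)^{k-1}\,u_i(x)/l=u_i(x)$).

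The genuine gap is the regularizer step. You invoke a ``multiclass coarea estimate'' $\mathbbm{E}\,\mathcal{H}^{d-1}(J_{\bar u_\gamma})\le \sum_i\mathrm{TV}(u_i)$ without argument, but this \emph{is} the content of the theorem and cannot be read off from a layer-cake decomposition. Your picture that ``the jump set of $\bar u_\gamma$ is assembled from parts of the reduced boundaries of the superlevel sets of the $u_i$'' is not accurate for Alg.~1: the sets whose boundaries enter are $M^k=U^{k-1}\cap\{u_{i^k}>\alpha^k\}$, and the factor $U^{k-1}$ is itself a nested intersection of earlier sublevel sets, so the boundary pieces compound across iterations rather than being a single slice of some $u_i$. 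The paper handles this by decomposing $\Omega$ into the ``shells'' $V^{k-1}\setminus V^k$ (measure-theoretic interiors of the $U^k$), proving a base-case bound $\mathbbm{E}\int_{V^0\setminus V^1}d\Psi(D\bar u)\le \tfrac{\lambda_u}{l}\sum_i\mathrm{TV}(u_i)$ (Prop.~\ref{prop:v0v1initial}), and then a contraction $\mathbbm{E}\int_{V^{k-1}\setminus V^k}\le \tfrac{l-1}{l}\,\mathbbm{E}\int_{V^{k-2}\setminus V^{k-1}}$ (Prop.~\ref{prop:egamlm1l}) whose proof uses a density-function bound for $1_{V_{(i,\alpha)}}$ and Fubini. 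Summing the resulting geometric series is what produces $\lambda_u\sum_i\mathrm{TV}(u_i)$ and hence, via the lower bound \eqref{eq:psilower}, the factor $2\lambda_u/\lambda_l$ for the regularizer. None of this is visible in your sketch, and your displayed chain of inequalities is internally inconsistent (the $\tfrac{1}{\lambda_l}$ appears twice, and the $\mathbbm{E}$ on $\int\sum_i|Du_i|$ is spurious; with the correct bookkeeping the regularizer factor is $2\lambda_u/\lambda_l$, not $\lambda_u/\lambda_l$).

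A second gap is the a.s.~$\mathrm{BV}$ conclusion. You argue that finiteness of $\mathbbm{E}f(\bar u)$ forces $\bar u\in\mathrm{BV}$ a.s., but your bound on the expected regularizer already presupposes that $\bar u_\gamma\in\mathrm{BV}$ so that $J_{\bar u_\gamma}$ and $\Psi(D\bar u_\gamma)$ make sense; this is circular. The paper establishes directly, before any energy estimate, that all iterates $u^k$ are a.s.\ in $\mathrm{BV}(\Omega)^l$ and that the sequence becomes stationary a.s.\ (Prop.~\ref{prop:ukvalid} and Thm.~\ref{thm:probterminates}), using that a.e.\ sublevel set of a $\mathrm{BV}$ function has finite perimeter and a combinatorial termination argument on the thresholds $c^k$.
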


Note that $\lambda_u \geqslant \lambda_l$ always holds if both are defined, since
\eqref{eq:psilower} implies, for $\nu$ with $\| \nu \|_2 = 1$,
\begin{eqnarray}
  \lambda_u \geqslant \Psi (\nu (e^i - e^j)^{\top}) & \geqslant &
  \frac{\lambda_l}{2} ( \| \nu \|_2 + \| \nu \|_2) = \lambda_l . 
\end{eqnarray}
The proof of Thm.~\ref{thm:mainthm} (Sect.~\ref{sec:proofmainthm}) is based on the work of Kleinberg and Tardos {\cite{Kleinberg1999}},
which is set in an LP relaxation framework. However their results are
restricted in that they assume a graph-based representation and extensively
rely on the finite dimensionality. In contrast, our results hold in the
continuous setting without assuming a particular problem discretization.

Theorem~\ref{thm:mainthm} guarantees that -- in a probabilistic sense -- the
rounding process may only increase the energy in a controlled way, with an
upper bound depending on~$\Psi$. An immediate consequence is

\begin{corollary}
  \label{cor:maincor}Under the conditions of Thm.~\ref{thm:mainthm}, if
  $u^{\ast}$ minimizes $f$ over $\mathcal{C}$, $u_{\mathcal{E}}^{\ast}$
  minimizes $f$ over $\mathcal{C}_{\mathcal{E}}$, and $\bar{u}^{\ast}$ denotes
  the output of Alg.~1 applied to $u^{\ast}$, then
  \begin{eqnarray}
    \mathbbm{E} f \left( \bar{u}^{\ast} \right) & \leqslant & 2
    \frac{\lambda_u}{\lambda_l} f (u_{\mathcal{E}}^{\ast}) . 
    \label{eq:efubarstar}
  \end{eqnarray}
\end{corollary}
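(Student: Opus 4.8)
\section*{Proof proposal for Corollary~\ref{cor:maincor}}

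The plan is simply to specialize Theorem~\ref{thm:mainthm} to the input $u = u^\ast$. By assumption $u^\ast \in \mathcal{C}$, and the structural hypotheses on $s$ (namely $s \in L^1(\Omega)^l$, $s \geqslant 0$) and on $\Psi$ (positive homogeneity, convexity, continuity, and the bounds \eqref{eq:psilower}--\eqref{eq:psiupper}) are precisely the conditions we are carrying over. Hence Theorem~\ref{thm:mainthm} applies verbatim and guarantees that Alg.~1 applied to $u^\ast$ returns an integral labeling $\bar{u}^\ast \in \mathcal{C}_{\mathcal{E}}$ almost surely, together with the bound
\begin{eqnarray}
  \mathbbm{E} f(\bar{u}^\ast) & \leqslant & 2 \frac{\lambda_u}{\lambda_l} f(u^\ast).
\end{eqnarray}

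What remains is to compare $f(u^\ast)$ with $f(u_{\mathcal{E}}^\ast)$. For this I only need the elementary inclusion $\mathcal{C}_{\mathcal{E}} = \tmop{BV}(\Omega,\mathcal{E}) \subseteq \tmop{BV}(\Omega,\Delta_l) = \mathcal{C}$, which holds because $\mathcal{E} \subseteq \Delta_l$. Minimizing the \emph{same} functional $f$ over the larger set $\mathcal{C}$ can only lower the optimal value, so
\begin{eqnarray}
  f(u^\ast) & = & \inf_{u \in \mathcal{C}} f(u) \;\leqslant\; \inf_{u \in \mathcal{C}_{\mathcal{E}}} f(u) \;=\; f(u_{\mathcal{E}}^\ast).
\end{eqnarray}
Chaining the two displays yields $\mathbbm{E} f(\bar{u}^\ast) \leqslant 2(\lambda_u/\lambda_l) f(u_{\mathcal{E}}^\ast)$, which is \eqref{eq:efubarstar}.

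The corollary is thus an immediate consequence of the main theorem together with monotonicity of the infimum under set inclusion, and I do not expect any genuine obstacle. The one point that deserves a remark is that the statement tacitly presupposes that the minimizers $u^\ast$ and $u_{\mathcal{E}}^\ast$ exist; if one prefers not to assume attainment, one can instead run Alg.~1 on a $\delta$-suboptimal $u \in \mathcal{C}$ with $f(u) \leqslant \inf_{\mathcal{C}} f + \delta$, obtain $\mathbbm{E} f(\bar{u}) \leqslant 2(\lambda_u/\lambda_l)\big(\inf_{\mathcal{C}_{\mathcal{E}}} f + \delta\big)$ from Theorem~\ref{thm:mainthm} and the inclusion above, and let $\delta \downarrow 0$.
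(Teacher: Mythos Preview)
Your proposal is correct and matches the paper's own argument essentially verbatim: the paper states that the corollary ``follows immediately using $f(u^{\ast}) \leqslant f(u_{\mathcal{E}}^{\ast})$'', which is precisely your application of Theorem~\ref{thm:mainthm} to $u = u^{\ast}$ followed by the inclusion $\mathcal{C}_{\mathcal{E}} \subseteq \mathcal{C}$. Your additional remark about handling non-attainment via $\delta$-suboptimal points is a reasonable aside not present in the paper.
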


Therefore the proposed approach allows to recover, from the solution
$u^{\ast}$ of the convex {\tmem{relaxed}} problem (\ref{eq:problemrelaxed}), an
approximate \tmtextit{integral} solution $\bar{u}^{\ast}$ of the nonconvex
{\tmem{original}} problem (\ref{eq:combprob}) with an upper bound on the
objective.

In particular, for the tight relaxation of the regularizer as in
(\ref{eq:chambreg}), we obtain (cf.~Prop.~\ref{prop:dijprop})
\begin{eqnarray}
  \mathbbm{E}f ( \bar{u}^{\ast}) & \leqslant & 2 \frac{\lambda_u}{\lambda_l} =
  2 \frac{\max_{i \neq j} d (i, j)}{\min_{i \neq j} d (i, j)}, 
  \label{eq:psidclaim}
\end{eqnarray}
which is exactly the same bound as has been proven for the combinatorial
$\alpha$-expansion method (\ref{eq:fuprime}).

To our knowledge, this is the first bound available for the fully spatially
convex relaxed problem (\ref{eq:problemrelaxed}). Related is the work of
Olsson et al.~{\cite{Olsson2009,Olsson2009a}}, where the authors consider a
continuous analogue to the $\alpha$-expansion method known as continuous
binary fusion {\cite{Trobin2008}}, and claim that a bound similar to
(\ref{eq:fuprime}) holds for the corresponding fixed points when using the
separable regularizer
\begin{eqnarray}
  \Psi_A (z) & \assign & \sum_{j = 1}^l \|Az^j_{} \|_2, \hspace{1em} z \in
  \mathbbm{R}^{d \times l}, 
\end{eqnarray}
for some $A \in \mathbbm{R}^{d \times d}$, which implements an anisotropic
variant of the uniform metric. However, a rigorous proof in the BV framework
was not given.

In {\cite{Bae2011}}, the authors propose to solve the problem
(\ref{eq:combprob}) by considering the dual problem to
(\ref{eq:problemrelaxed}) consisting of $l$ coupled maximum-flow problems,
which are solved using a log-sum-exp smoothing technique and gradient descent.
In case the dual solution allows to unambiguously recover an (integral) primal
solution, the latter is necessarily the unique minimizer of $f$, and therefore
a global {\tmem{integral}} minimizer of the combinatorial problem
(\ref{eq:combprob}). This provides an {\tmem{a posteriori}} bound,
which applies if a dual solution can be computed. While useful in practice as
a certificate for global optimality, in the spatially continuous setting it
requires explicit knowledge of a dual solution, which is rarely available
since it depends on the regularizer $\Psi$ as well as the input data~$s$.

In contrast, the {\tmem{a priori}} bound (\ref{eq:efubarstar}) holds uniformly
over all problem instances, does not require knowledge of any primal or dual
solutions and covers also non-uniform regularizers.

\section{A Probabilistic View of the Coarea Formula}

\subsection{The Two-Class Case}

As a motivation for the following sections, we first provide a probabilistic
interpretation of a tool often used in geometric measure theory, the
{\tmem{coarea formula}} (cf.~{\cite{Ambrosio2000}}). Assuming $u' \in
\tmop{BV} (\Omega)$ and $u' (x) \in [0, 1]$ for a.e.~$x \in \Omega$, the
coarea formula states that the total variation of $u$ can be represented by summing
the boundary lengths of its super-levelsets:
\begin{eqnarray}
  \tmop{TV} (u') & = & \int_0^1 \tmop{TV} (1_{\{u' > \alpha\}}) d \alpha_{} . 
  \label{eq:probcoareaintro}
\end{eqnarray}
Here $1_A$ denotes the characteristic function of a set $A$, i.e.,~$1_A (x) =
1$ iff $x \in A$ and $1_A (x) = 0$ otherwise. The coarea formula provides a
connection between problem~(\ref{eq:combprob}) and the
relaxation~(\ref{eq:problemrelaxed}) in the two-class case, where $\mathcal{E} =
\{e^1, e^2 \}$, $u \in \mathcal{C}_{\mathcal{E}}$ and $u_1 = 1 - u_2$: As
noted in {\cite{Lellmann2009a}},
\begin{eqnarray}
  & \tmop{TV} (u) = \|e^1 - e^2 \|_2 \tmop{TV} (u_1) = \sqrt{2} \tmop{TV}
  (u_1), & 
\end{eqnarray}
therefore the coarea formula~(\ref{eq:probcoareaintro}) can be rewritten as
\begin{eqnarray}
  \tmop{TV} (u) & = & \sqrt{2} \int_0^1 \tmop{TV} (1_{\{u_1 > \alpha\}}) d
  \alpha \\
  & = & \int_0^1 \tmop{TV} (e^1 1_{\{u_1 > \alpha\}} + e^2 1_{\{u_1 \leqslant
  \alpha\}}) d \alpha \\
  & = & \int_0^1 \tmop{TV} ( \bar{u}_{\alpha}) d \alpha,\\
  & & \bar{u}_{\alpha} \assign e^1 1_{\{u_1 > \alpha\}} + e^2
  1_{\{u_1 \leqslant \alpha\}} .  \label{eq:tvbarycenter}
\end{eqnarray}
Consequently, the total variation of $u$ can be expressed as the {\tmem{mean}}
over the total variations of a set of {\tmem{integral}} labelings $\{
\bar{u}_{\alpha} \in \mathcal{C}_{\mathcal{E}} | \alpha \in [0, 1]\}$,
obtained by {\tmem{rounding $u$ at different thresholds $\alpha$}}. We now
adopt a {\tmem{probabilistic}} view of~(\ref{eq:tvbarycenter}): We regard the~mapping
\begin{eqnarray}
  R : (u, \alpha) \in \mathcal{C} \times [0, 1] & \mapsto & \bar{u}_{\alpha}
  \in \mathcal{C}_{\mathcal{E}} \quad (\text{a.e.~} \alpha
  \in [0, 1])  \label{eq:probparamround}
\end{eqnarray}
as a {\tmem{parametrized, deterministic}} rounding algorithm that depends on
$u$ and on an additional parameter $\alpha$. From this we obtain a
\tmtextit{probabilistic} (randomized) rounding algorithm by assuming $\alpha$
to be a uniformly distributed random variable. Under these assumptions the
coarea formula (\ref{eq:tvbarycenter}) can be written as
\begin{eqnarray}
  \tmop{TV} (u) & = & \mathbbm{E}_{\alpha} \tmop{TV} ( \bar{u}_{\alpha}) . 
  \label{eq:coareaasexpectation}
\end{eqnarray}
This has the probabilistic interpretation that applying the probabilistic
rounding to (arbitrary, but fixed) $u$ does -- in a probabilistic sense,
i.e.,~in the mean -- not change the objective. It can be shown that this
property extends to the full functional $f$ in (\ref{eq:problemrelaxed}): In the
two-class case, the ``coarea-like'' property
\begin{eqnarray}
  f (u) & = & \mathbbm{E}_{\alpha} f ( \bar{u}_{\alpha}) 
  \label{eq:coarealike}
\end{eqnarray}
holds. Functions with property (\ref{eq:coarealike}) are also known as
{\tmem{levelable functions}} {\cite{Darbon2006,Darbon2006a}} or
{\tmem{discrete total variations}}~{\cite{Chambolle2009}} and have been
studied in {\cite{Strandmark2009}}. A well-known implication is that if $u =
u^{\ast}$, i.e.,~$u$ minimizes the relaxed problem (\ref{eq:problemrelaxed}), then in
the two-class case almost every $\bar{u}^{\ast} = \bar{u}^{\ast}_{\alpha}$ is
an integral minimizer of the original problem (\ref{eq:combprob}),
i.e.,~the optimality bound (\ref{eq:subopteps}) holds with $\varepsilon = 0$
{\cite{Nikolova2006}}.

\subsection{The Multi-Class Case and Generalized Coarea Formulas}

Generalizing these observations to more than two labels hinges on a property
similar to (\ref{eq:coarealike}) that holds for {\tmem{vector-valued}}
$u$. In a general setting, the question is whether there exist
\begin{itemize}
  \item a probability space $(\Gamma, \mu)$, and
  
  \item a {\tmem{parametrized rounding method}}, i.e.,~for $\mu$-almost every~$\gamma
  \in \Gamma$:
  \begin{eqnarray}
    & R : & \mathcal{C} \times \Gamma \rightarrow \mathcal{C}_{\mathcal{E}},
    \\
    &  & u \in \mathcal{C} \mapsto \bar{u}_{\gamma} \assign R_{\gamma} (u)
    \in \mathcal{C}_{\mathcal{E}} 
  \end{eqnarray}
  satisfying $R_{\gamma} (u') = u'$ for all $u' \in
  \mathcal{C}_{\mathcal{E}}$,
\end{itemize}
such that a ``multiclass coarea-like property'' (or {\tmem{generalized coarea
formula}})
\begin{eqnarray}
  f (u) & = & \int_{\Gamma} f ( \bar{u}_{\gamma}) d \mu (\gamma) 
  \label{eq:coarea-like-like}
\end{eqnarray}
holds. In a probabilistic sense this corresponds to
\begin{eqnarray}
  f (u) & = & \int_{\Gamma} f ( \bar{u}_{\gamma}) d \mu (\gamma) =
  \mathbbm{E}_{\gamma} f ( \bar{u}_{\gamma}).  \label{eq:probroundgeneralized}
\end{eqnarray}
For $l = 2$ and $\Psi (x) = \|\cdummy \|_2$, (\ref{eq:coareaasexpectation}) shows that
(\ref{eq:probroundgeneralized}) holds with $\gamma = \alpha$, $\Gamma = [0,
1]$, $\mu =\mathcal{L}^1$, and $R : \mathcal{C} \times \Gamma \rightarrow
\mathcal{C}_{\mathcal{E}}$ as defined in (\ref{eq:probparamround}). Unfortunately, property (\ref{eq:coareaasexpectation}) is intrinsically
restricted to the two-class case with $\tmop{TV}$ regularizer.

In the multiclass case, the difficulty lies in providing a suitable
combination of a probability space $(\Gamma, \mu)$ and a parametrized rounding
step $(u, \gamma) \mapsto \bar{u}_{\gamma}$. Unfortunately, obtaining a
relation such as (\ref{eq:coareaasexpectation}) for the full functional
(\ref{eq:combprob}) is unlikely, as it would mean that solutions to the (after
discretization) NP-hard problem (\ref{eq:combprob}) could be obtained by
solving the convex relaxation (\ref{eq:problemrelaxed}) and subsequent rounding, which can be achieved in polynomial time.

Therefore we restrict ourselves to an \tmtextit{approximate} variant of the
generalized coarea formula:
\begin{eqnarray}
  (1 + \varepsilon) f (u) & \geqslant & \int_{\Gamma} f ( \bar{u}_{\gamma}) d
  \mu (\gamma) = \mathbbm{E}_{\gamma} f ( \bar{u}_{\gamma}) . 
  \label{eq:probrounddesired}
\end{eqnarray}
While (\ref{eq:probrounddesired}) is not sufficient to provide a bound on $f (
\bar{u}_{\gamma})$ for {\tmem{particular}} $\gamma$, it permits a
{\tmem{probabilistic}} bound: for any minimizer $u^{\ast}$ of the relaxed problem (\ref{eq:problemrelaxed}), eq.~\eqref{eq:probrounddesired} implies
\begin{eqnarray}
  & \mathbbm{E}_{\gamma} f ( \bar{u}^{\ast}_{\gamma}) \leqslant (1 +
  \varepsilon) f (u^{\ast}) \leqslant (1 + \varepsilon) f
  (u^{\ast}_{\mathcal{E}}), &  \label{eq:proboptimalityfromcoarea}
\end{eqnarray}
i.e.,~the ratio between the objective of the {\tmem{rounded relaxed
solution}} and the {\tmem{optimal integral solution}} is bounded -- in a
probabilistic sense -- by $(1 + \varepsilon)$.

In the following sections we construct a suitable parametrized rounding method
and probability space in order to obtain an approximate generalized coarea formula of the
form (\ref{eq:probrounddesired}).

\section{Probabilistic Rounding for Multiclass Image Partitions}\label{sec:aprioribounds}

\subsection{Approach}

We consider the probabilistic rounding approach based on
{\cite{Kleinberg1999}} as defined in
Alg.~1. %

\begin{algorithm}[tb]
\caption{Continuous Probabilistic Rounding}\label{alg:contprobround}
\begin{algorithmic}[1]
  \STATE $u^0 \leftarrow u$, $U^0 \leftarrow \Omega$, $c^0 \leftarrow (1, \ldots,1) \in \mathbbm{R}^l$.
    
  \FOR{$k = 1, 2, \ldots$}
   
    \STATE \label{step:choosealphak}Randomly choose $\gamma^k = (i^k, \alpha^k) \in \mathcal{I} \times [0, 1]$ uniformly.

    \STATE $M^k \leftarrow U^{k - 1} \cap \{^{} x \in \Omega |u_{i^k}^{k - 1} (x) > \alpha^k \}$.
      
    \STATE \label{step:assignu}$u^k \leftarrow e^{i^k} 1_{M^k} + u^{k - 1} 1_{\Omega \setminus M^k}$.
    
    \STATE $U^k \leftarrow U^{k - 1} \setminus M^k$.
      
    \STATE $c_j^k \leftarrow \left\{ \begin{array}{ll}
        \min \{c_j^{k - 1}, \alpha^k \}, & j = i^k,\\
        c_j^{k - 1}, & \tmop{otherwise} .
      \end{array} \right.$
  \ENDFOR
\end{algorithmic}
\end{algorithm}

The algorithm proceeds in a number of phases. At each iteration,
a label and a threshold
\begin{eqnarray*}
  \gamma^k \assign (i^k, \alpha^k) & \in & \Gamma' \assign \mathcal{I} \times
  [0, 1]
\end{eqnarray*}
are randomly chosen (step \ref{step:choosealphak}), and label $i^k$ is
assigned to all yet unassigned points~$x$ where $u_{i^k}^{k - 1} (x) >
\alpha^k$ holds (step \ref{step:assignu}). In contrast to the two-class case
considered above, the randomness is provided by a \tmtextit{sequence}
$(\gamma^k)$ of uniformly distributed random variables, i.e.,~$\Gamma =
(\Gamma')^{\mathbbm{N}}$.

After iteration $k$, all points in the set $U^k \subseteq \Omega$ are still
{\tmem{unassigned}}, while all points in $\Omega \setminus U^k$ have been
assigned an (integral) label in iteration $k$ or in a previous iteration. 
Iteration $k + 1$ potentially modifies points only in the set $U^k$. The
variable $c_j^k$ stores the lowest threshold $\alpha$ chosen for label $j$ up
to and including iteration $k$, and is only required for the proofs.

While the algorithm is defined using pointwise operations, it is well-defined
in the sense that for fixed $\gamma$, the sequence $(u^k)$, viewed as elements
in $L^1$, does not depend on the specific representative of $u$ in its equivalence
class in $L^1$. The sequences $(M^k)$ and $(U^k)$ depend on the
representative, but are unique up to $\mathcal{L}^d$-negligible sets.

In an actual implementation, the algorithm could be terminated as soon as all
points in $\Omega$ have been assigned a label, i.e.,~$U^k = \emptyset$.
However, in our framework used for analysis the algorithm never terminates
explicitly. Instead, for fixed input $u$ we regard the algorithm as a mapping
between {\tmem{sequences}} of parameters (or instances of random variables)
$\gamma = (\gamma^k) \in \Gamma$ and {\tmem{sequences}} of states
$(u_{\gamma}^k)$, $(U_{\gamma}^k)$ and $(c_{\gamma}^k)$. We drop the
subscript~$\gamma$ if it does not create ambiguities. The elements of the
sequence $(\gamma^{(k)})$ are independently uniformly distributed, and by the
Kolmogorov extension theorem {\cite[Thm. 2.1.5]{Oksendal2003}} there exists a
probability space and a stochastic process on the set of sequences $\gamma$
with compatible marginal distributions.

In order to define the parametrized rounding step $(u, \gamma) \mapsto
\bar{u}_{\gamma}$, we observe that once $U^{k'}_{\gamma} = \emptyset$ occurs
for some $k' \in \mathbbm{N}$, the sequence $(u^k_{\gamma})$ becomes
stationary at $u_{\gamma}^{k'}$. In this case the algorithm may be terminated,
with output $\bar{u}_{\gamma} \assign u_{\gamma}^{k'}$:

\begin{definition}
  \label{def:fubar}Let $u \in \tmop{BV} (\Omega)^l$ and $f : \tmop{BV}
  (\Omega)^l \rightarrow \mathbbm{R}$. For some $\gamma \in \Gamma$, if
  $U_{\gamma}^{k'} = \emptyset$ in Alg.~1 for some
  $k' \in \mathbbm{N}$, we denote $\bar{u}_{\gamma} \assign u_{\gamma}^{k'}$.
  We define
  \begin{eqnarray}
    & & f ( \bar{u}_{(\cdummy)}) : \Gamma \rightarrow \mathbbm{R} \cup \{+
    \infty\}, \gamma \in \Gamma \mapsto f ( \bar{u}_{\gamma})\,, \label{eq:fugn}\\
    & & f ( \bar{u}_{\gamma}) \assign \left\{
    \begin{array}{ll}
      f (u^{k'}_{\gamma}), & \exists k' \in \mathbbm{N}: \; U^{k'}_{\gamma} = \emptyset \wedge u^{k'}_{\gamma} \in \tmop{BV}
      (\Omega)^l,\\
      + \infty, & \text{\tmop{otherwise}} .
    \end{array} \right.\nn
  \end{eqnarray}
  We denote by $f ( \bar{u})$ the corresponding random variable induced by
  assuming $\gamma$ to be uniformly distributed on~$\Gamma$.
\end{definition}

As indicated above, $f ( \bar{u}_{\gamma})$ is well-defined: if
$U^{k'}_{\gamma} = \emptyset$ for some $(\gamma, k')$ then $u_{\gamma}^{k'} =
u_{\gamma}^{k''}$ for all $k'' \geqslant k'$. Instead of focusing on local
properties of the random sequence $(u_{\gamma}^k)$ as in the proofs for the
finite-dimensional case, we derive our results directly for the sequence $(f
(u_{\gamma}^k))$. In particular, we show that the expectation of $f (
\bar{u})$ over all sequences $\gamma$ can be bounded according to
\begin{eqnarray}
  \mathbbm{E} f ( \bar{u}) = \mathbbm{E}_{\gamma} f ( \bar{u}_{\gamma}) &
  \leqslant & (1 + \varepsilon) f ( \bar{u})  \label{eq:thegoal}
\end{eqnarray}
for some $\varepsilon \geqslant 0$, cf.~(\ref{eq:probrounddesired}).
Consequently, the rounding process may only increase the average objective in
a controlled way.

\subsection{Termination Properties}\label{sec:probroundterm}

Theoretically, the algorithm may produce a sequence $(u_{\gamma}^k)$ that does
{\tmem{not}} become stationary, or becomes stationary with a solution that is
not an element of $\tmop{BV} (\Omega)^l$. In Thm.~\ref{thm:probterminates} below we
show that this happens only with zero probability, i.e.,~almost surely
Alg.~1 generates (in a finite number of iterations)
an \tmtextit{integral} labeling function $\bar{u}_{\gamma} \in
\mathcal{C}_{\mathcal{E}}$. The following two propositions are required for
the proof.

\begin{proposition}
  \label{prop:petckl1}For the sequence $(c^k)$ generated by
  Algorithm~1,
  \begin{eqnarray}
    & & \mathbbm{P} (e^{\top} c^k < 1) \geqslant \label{eq:convprop1}\\
    & & \quad \sum_{p \in \{0, 1\}^l}
    \left( - 1 \right)^{e^{\top} p}  \left( \sum_{j = 1}^l \frac{1}{l}  \left(
    \left( 1 - \frac{1}{l} \right)^{p_j} \right) \right)^k\nonumber    
  \end{eqnarray}
  holds. In particular,
  \begin{eqnarray}
    \mathbbm{P} (e^{\top} c^k < 1) & \overset{k \rightarrow
    \infty}{\rightarrow} & 1.  \label{eq:convprop2}
  \end{eqnarray}
\end{proposition}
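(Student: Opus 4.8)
The plan is to track, for a fixed label $j$, the random variable $c_j^k$, which by construction equals $\min\{\alpha^{m} : 1 \le m \le k,\ i^m = j\}$ (with the convention $\min \emptyset = 1$, since $c_j^0 = 1$). In each iteration the pair $(i^k,\alpha^k)$ is drawn uniformly from $\mathcal{I}\times[0,1]$, so the event "$i^k = j$" has probability $1/l$ and, conditioned on it, $\alpha^k$ is uniform on $[0,1]$. First I would compute, for each $t\in[0,1]$, the one-step probability that label $j$'s running minimum stays above $t$: with probability $1-\tfrac1l$ label $j$ is not chosen and $c_j$ is unchanged, and with probability $\tfrac1l$ it is chosen and we need $\alpha^k > t$, which has probability $1-t$. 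Hence
\begin{eqnarray}
  \mathbbm{P}(c_j^k > t) & = & \left(1 - \tfrac1l + \tfrac1l(1-t)\right)^k \;=\; \left(1 - \tfrac{t}{l}\right)^k,\nonumber
\end{eqnarray}
using independence of the iterations, and the $c_j^k$ for different $j$ are mutually independent since each iteration touches only one coordinate and the choices across iterations are independent.

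Next I would express $\mathbbm{P}(e^\top c^k < 1) = \mathbbm{P}(\sum_j c_j^k < 1)$ via inclusion–exclusion over which coordinates are "large." The natural identity is $\mathbbm{P}(\sum_j c_j^k < 1) = 1 - \mathbbm{P}(\sum_j c_j^k \ge 1)$, but to get the stated alternating sum over $p\in\{0,1\}^l$ I would instead integrate the joint density. Since the $c_j^k$ are i.i.d. with $\mathbbm{P}(c_j^k > t) = (1 - t/l)^k =: G(t)$, one has
\begin{eqnarray}
  \mathbbm{P}\!\left(\textstyle\sum_j c_j^k < 1\right) & = & \int_{\{\sum_j t_j < 1\}\cap[0,1]^l} \prod_{j=1}^l \big(-G'(t_j)\big)\, dt \nonumber
\end{eqnarray}
and I would then rewrite $\mathbf{1}_{\{\sum t_j < 1\}}$ on the relevant domain through an inclusion–exclusion expansion that produces one term per subset of coordinates; each such term factorizes over $j$ because both the density and the region decouple. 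A cleaner route that lands exactly on the displayed formula: note $1 - G(t) = \mathbbm{P}(c_j^k \le t)$, and observe that the quantity $\sum_{j}\frac1l\big((1-\frac1l)^{p_j}\big)$ appearing in \eqref{eq:convprop1} is, for $p\in\{0,1\}^l$, precisely a product-form probability $\prod$ over iterations of "label chosen lands in the set selected by $p$", raised to the power $k$ for the $k$ independent iterations; so I would identify $\sum_{p}(-1)^{e^\top p}(\cdots)^k$ as the inclusion–exclusion expansion of $\mathbbm{P}(\text{all } l \text{ coordinates have received at least one "small" threshold})$, which I must reconcile with the event $\{e^\top c^k < 1\}$ up to the boundary (a measure-zero discrepancy, hence the "$\geqslant$"). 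The main obstacle is exactly this combinatorial bookkeeping: matching the geometric simplex event $\{\sum_j c_j^k < 1\}$ with the right union/intersection of coordinatewise events so that inclusion–exclusion yields the stated sum, and verifying that the inequality (rather than equality) direction is the correct and safe one.

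Finally, for \eqref{eq:convprop2} I would argue directly rather than from the formula: for each fixed $j$, $\mathbbm{P}(c_j^k > t) = (1-t/l)^k \to 0$ as $k\to\infty$ for every $t>0$, so $c_j^k \to 0$ in probability, hence $\sum_{j=1}^l c_j^k \to 0$ in probability, and in particular $\mathbbm{P}(\sum_j c_j^k < 1)\to 1$. (If one prefers to use the explicit formula, the $p = 0$ term contributes $1$ and every term with $p\ne 0$ has base strictly less than $1$ and therefore vanishes as $k\to\infty$, leaving the limit $1$.) This step is routine once the tail bound on each $c_j^k$ is in hand.
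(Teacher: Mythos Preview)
Your marginal computation $\mathbbm{P}(c_j^k > t) = (1 - t/l)^k$ is correct, and your argument for \eqref{eq:convprop2} via $c_j^k \to 0$ in probability is clean and valid. However, your approach to \eqref{eq:convprop1} has a genuine gap.

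The claim that the $c_j^k$ are mutually independent is false. For a single iteration ($k=1$), a direct computation gives $\mathbbm{P}(c_1^1 > s,\, c_2^1 > t) = 1 - s/l - t/l$, whereas $\mathbbm{P}(c_1^1 > s)\,\mathbbm{P}(c_2^1 > t) = (1 - s/l)(1 - t/l)$; these differ by $st/l^2$. The dependence arises because the coordinates compete for the same iteration: if label $1$ is chosen, label $2$ is not. Consequently, your integral over the simplex with a product density is not justified, and the subsequent inclusion--exclusion does not factorize as you hope.

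The paper circumvents both obstacles you flag. First, the inequality in \eqref{eq:convprop1} does not come from a measure-zero boundary effect but from the elementary inclusion $\{c_j^k < 1/l \text{ for all } j\} \subseteq \{e^\top c^k < 1\}$; the right-hand side of \eqref{eq:convprop1} is exactly $\mathbbm{P}(c_j^k < 1/l\ \forall j)$. Second, while the $c_j^k$ are not independent, they \emph{are} conditionally independent given the occupation counts $(n_1^k,\ldots,n_l^k)$, which are multinomial. Conditioning on these counts, expanding $\prod_j\big(1 - (1 - 1/l)^{n_j^k}\big)$ over $p\in\{0,1\}^l$, and then summing via the multinomial theorem produces the displayed formula. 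Your intuition that the formula encodes ``every label has received at least one small threshold'' is correct once ``small'' is read as ``$< 1/l$'', but reaching it requires this conditioning step rather than unconditional independence.
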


\begin{proof}
  Denote by $n^k_j \in \mathbbm{N}_0$ the number of $k' \in \{1, \ldots, k\}$
  such that $i^{k'} = j$, i.e.,~the number of times label $j$ was selected up
  to and including the $k$-th step. Then
  \begin{eqnarray}
    (n_1^k, \ldots, n_l^k) & \sim & \tmop{Multinomial} \left( k ; \frac{1}{l},
    \ldots, \frac{1}{l} \right), 
  \end{eqnarray}
  i.e.,~the probability of a specific instance is
  \begin{eqnarray}
    \mathbbm{P} ((n_1^k, \ldots, n_l^k)) & = & \left\{ \begin{array}{ll}
      \frac{k!}{n_1^k ! \cdot \ldots \cdot n_l^k !}  \left( \frac{1}{l}
      \right)^k, & \sum_j n_j^k = k,\\
      0, & \tmop{otherwise} .
    \end{array} \right. 
  \end{eqnarray}
  Therefore,
  \begin{eqnarray}
    \mathbbm{P} (e^{\top} c^k < 1) & = & \sum_{n_1^k, \ldots, n_l^k}
    \mathbbm{P}(e^{\top} c^k < 1| (n_1^k, \ldots, n_l^k))\cdot\ldots\nonumber\\
    & & \quad\quad \mathbbm{P}((n_1^k, \ldots, n_l^k)) \\
    & = & \sum_{n_1^k + \ldots + n_l^k = k} \frac{k!}{n_1^k ! \cdot \ldots \cdot n_l^k !} \left( \frac{1}{l} \right)^k\cdot\ldots\nonumber\\
    & & \quad\quad \mathbbm{P}(e^{\top} c^k < 1|
    (n_1^k, \ldots, n_l^k)) .  \label{eq:etctoshow}
  \end{eqnarray}
  Since $c_1^k, \ldots, c_l^k < \frac{1}{l}$ is a sufficient condition for
  $e^{\top} c < 1$, we may bound the probability according to
  \begin{eqnarray}
    \mathbbm{P} (e^{\top} c < 1) & \geqslant & \sum_{n_1^k + \ldots + n_l^k =
    k} \frac{k!}{n_1^k ! \cdot \ldots \cdot n_l^k !}  \left( \frac{1}{l}
    \right)^k \cdot\ldots\nonumber\\
    & & \quad\quad \mathbbm{P} \left( c_j^k < \frac{1}{l} \forall j \in
    \mathcal{I}| (n_1^k, \ldots, n_l^k) \right).  \label{eq:petc1}
  \end{eqnarray}
  We now consider the distributions of the components $c^k_j$ of $c^k$
  conditioned on the vector $(n_1^k, \ldots, n_l^k)$. Given $n_j^k$, the
  probability of $\{c_j^k \geqslant t\}$ is the probability that in each of
  the $n_j^k$ steps where label $j$ was selected the threshold $\alpha$ was
  randomly chosen to be \tmtextit{at least as large as} $t$. For $0 < t < 1$,
  we conclude
  \begin{eqnarray}
    \mathbbm{P} (c^k_j < t| (n_1^k, \ldots, n_l^k)) & = & \mathbbm{P} (c^k_j <
    t|n_j^k) \\
    & = & 1 -\mathbbm{P} (c_j^k \geqslant t|n_j^k) \\
    & \overset{0 < t < 1}{=} & 1 - \left( 1 - t \right)^{n_j^k} . 
    \label{eq:onefifteen}
  \end{eqnarray}
  The above formulation also covers the case $n_j^k = 0$ (note that we assumed
  $0 < t < 1$). For fixed $k$ the distributions of the $c_j^k$ are independent
  when conditioned on $(n_1^k, \ldots, n_l^k)$. Therefore we obtain from
  (\ref{eq:petc1}) and (\ref{eq:onefifteen})
  \begin{eqnarray}
    \mathbbm{P} (e^{\top} c < 1) & \overset{\text{(
    \ref{eq:petc1})}}{\geqslant} & \sum_{n_1^k + \ldots + n_l^k = k}
    \frac{k!}{n_1^k ! \cdot \ldots \cdot n_l^k !}  \left( \frac{1}{l}
    \right)^k \cdot\cont \\
    & & \quad\quad\prod_{j = 1}^l \mathbbm{P} \left( c_j^k < \frac{1}{l} |
    (n_1^k, \ldots, n_l^k) \right) \\
    & \overset{( \ref{eq:onefifteen})}{=} & \sum_{n_1^k + \ldots + n_l^k = k}
    \frac{k!}{n_1^k ! \cdot \ldots \cdot n_l^k !}  \left( \frac{1}{l}
    \right)^k  +\cont\\
    & & \qq \prod_{j = 1}^l \left( 1 - \left( 1 - \frac{1}{l}
    \right)^{n_j^k} \right) . 
  \end{eqnarray}
  Expanding the product and swapping the summation order, we derive
  \begin{eqnarray}
    & & \mathbbm{P} (e^{\top} c^k < 1) \\
    & \geqslant & \sum_{n_1^k + \ldots + n_l^k
    = k} \frac{k!}{n_1^k ! \cdot \ldots \cdot n_l^k !}  \left( \frac{1}{l}
    \right)^k  \mcont\\
    & & \qq\sum_{p \in \{0, 1\}^l} \prod_{j = 1}^l \left( - \left( 1 -
    \frac{1}{l} \right)^{n_j^k} \right)^{p_j}\\
    & = & \sum_{p \in \{0, 1\}^l} \left( - 1 \right)^{e^{\top} p} 
    \sum_{n_1^k + \ldots + n_l^k = k} \frac{k!}{n_1^k ! \cdot \ldots \cdot
    n_l^k !} \mcont\\
    & & \qq\prod_{j = 1}^l \left( \frac{1}{l}  \left( 1 - \frac{1}{l}
    \right)^{p_j} \right)^{n_j^k }\,.
  \end{eqnarray}
  Using the multinomial summation formula, we conclude
  \begin{eqnarray} 
    && \mathbbm{P} (e^{\top} c^k < 1) \geqslant\nn\\
    && \qq\sum_{p \in \{0, 1\}^l} \left( - 1
    \right)^{e^{\top} p}  \left( \underbrace{\sum_{j = 1}^l \frac{1}{l} 
    \left( 1 - \frac{1}{l} \right)^{p_j}}_{= : q_p} \right)^k, 
    \label{eq:laststareq}
  \end{eqnarray}
  which proves (\ref{eq:convprop1}). At $(\ast)$ the multinomial summation
  formula was invoked. Note that in (\ref{eq:laststareq}) the $n_j^k$ do not
  occur explicitly anymore. To show the second assertion (\ref{eq:convprop2}),
  we use the fact that, for any  $p \neq (0, \ldots, 0)$, $q_p$ can be bounded by $0 < q_p < 1$.
  Therefore
  \begin{eqnarray}
    \mathbbm{P} (e^{\top} c^k < 1) & \geqslant & q_0 + \sum_{p \in \{0, 1\}^l,
    p \neq 0} ( - 1)^{e^{\top} p}  (q_p)^k \\
    & = & 1 + \sum_{p \in \{0, 1\}^l, p \neq 0} ( - 1)^{e^{\top} p} 
    \underbrace{(q_p)^k}_{\overset{k \rightarrow \infty}{\rightarrow} 0} \\
    & \overset{k \rightarrow \infty}{\rightarrow} & 1, 
  \end{eqnarray}
  which proves (\ref{eq:convprop2}).
\end{proof}

We now show that Alg.~1 generates a sequence in
$\tmop{BV} (\Omega)^l$ almost surely. The {\tmem{perimeter}} of a set $A$ is
defined as the total variation of its characteristic function $\tmop{Per} (A)
\assign \tmop{TV} (1_A)$.

\begin{proposition}
  \label{prop:ukvalid}For the sequences $(u^k)$, $(U^k)$ generated by
  Alg.~1, define
  \begin{eqnarray}
    A & \assign & \bigcap_{k = 1}^{\infty} \{\gamma \in \Gamma | \tmop{Per}
    (U^k_{\gamma}) < \infty\} . 
  \end{eqnarray}
  Then
  \begin{eqnarray}
    \mathbbm{P} (A) & = & 1. 
  \end{eqnarray}
  If $\tmop{Per} (U^k_{\gamma}) < \infty$ for all $k$, then $u^k_{\gamma} \in
  \tmop{BV} (\Omega)^l$ for all $k$ as well. Moreover,
  \begin{eqnarray}
    \mathbbm{P} (u^k \in \tmop{BV} (\Omega)^l \wedge \tmop{Per} (U^k) < \infty
    \forall k \in \mathbbm{N}) & = & 1,  \label{eq:pukbvom}
  \end{eqnarray}
  i.e.,~the algorithm almost surely generates a sequence of $\tmop{BV}$
  functions $(u^k)$ and a sequence of sets of finite perimeter $(U^k)$.
\end{proposition}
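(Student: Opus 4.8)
The plan is to combine the coarea formula with the standard stability of sets of finite perimeter under Boolean operations, propagate finiteness of the perimeters through the iterations by a conditioning argument, and then recover the statement ``almost surely for \emph{all} $k$'' from ``almost surely for \emph{each} $k$'' by continuity of measures from above. Throughout I use that $u = u^0 \in \mathcal{C} \subseteq \tmop{BV}(\Omega)^l \cap L^\infty(\Omega)^l$, and that a straightforward induction on step~\ref{step:assignu} gives $u^k_\gamma(x) \in \Delta_l$ for a.e.~$x$, hence $u^k_\gamma \in L^\infty(\Omega)^l$ with $\|u^k_\gamma\|_\infty \leqslant 1$.

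First I would establish the purely deterministic implication by induction on $k$: if $\tmop{Per}(U^j_\gamma) < \infty$ for all $j \leqslant k$, then $u^j_\gamma \in \tmop{BV}(\Omega)^l$ for all $j \leqslant k$. The base case is the hypothesis $u^0 = u \in \mathcal{C}$. For the inductive step, observe that $M^k_\gamma = U^{k-1}_\gamma \setminus U^k_\gamma$ with $U^k_\gamma \subseteq U^{k-1}_\gamma$, so $1_{M^k_\gamma} = 1_{U^{k-1}_\gamma} - 1_{U^k_\gamma}$ and therefore $\tmop{Per}(M^k_\gamma) \leqslant \tmop{Per}(U^{k-1}_\gamma) + \tmop{Per}(U^k_\gamma) < \infty$; hence $1_{M^k_\gamma}, 1_{\Omega \setminus M^k_\gamma} \in \tmop{BV}(\Omega) \cap L^\infty(\Omega)$. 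Since $u^{k-1}_\gamma \in \tmop{BV}(\Omega)^l \cap L^\infty(\Omega)^l$ by the inductive hypothesis, the fact that $\tmop{BV} \cap L^\infty$ is closed under products (cf.~\cite{Ambrosio2000}) yields $u^k_\gamma = e^{i^k} 1_{M^k_\gamma} + u^{k-1}_\gamma 1_{\Omega \setminus M^k_\gamma} \in \tmop{BV}(\Omega)^l$.

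Next I would show that a single iteration preserves finiteness of the perimeter with conditional probability one. Fix $k$ and set $A_{k-1} \assign \bigcap_{j=0}^{k-1} \{\gamma : \tmop{Per}(U^j_\gamma) < \infty\}$, which is measurable with respect to the $\sigma$-algebra $\mathcal{F}_{k-1}$ generated by $\gamma^1, \ldots, \gamma^{k-1}$ (a routine measurability check is needed here, since $\gamma \mapsto \tmop{Per}(U^k_\gamma)$ must be measurable for these events to make sense; this uses lower semicontinuity of the perimeter and measurable dependence of $u^k_\gamma$ on $\gamma$ in $L^1$). Conditionally on $\mathcal{F}_{k-1}$, on the event $A_{k-1}$ the pair $(u^{k-1}_\gamma, U^{k-1}_\gamma)$ is determined with $u^{k-1}_\gamma \in \tmop{BV}(\Omega)^l$ (by the previous paragraph) and $\tmop{Per}(U^{k-1}_\gamma) < \infty$, while $\gamma^k = (i^k, \alpha^k)$ is uniformly distributed on $\mathcal{I} \times [0,1]$ and independent of $\mathcal{F}_{k-1}$. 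The coarea formula gives $\int_0^1 \tmop{Per}(\{u^{k-1}_{\gamma,i} > \alpha\})\, d\alpha = \tmop{TV}(u^{k-1}_{\gamma,i}) < \infty$ for every $i \in \mathcal{I}$, so $\tmop{Per}(\{u^{k-1}_{\gamma,i} > \alpha\}) < \infty$ for $\mathcal{L}^1$-a.e.~$\alpha \in [0,1]$; by Fubini this forces $\tmop{Per}(\{u^{k-1}_{\gamma,i^k} > \alpha^k\}) < \infty$ with conditional probability one. Since $M^k_\gamma = U^{k-1}_\gamma \cap \{u^{k-1}_{\gamma,i^k} > \alpha^k\}$ and $U^k_\gamma = U^{k-1}_\gamma \setminus M^k_\gamma$, the estimate $\tmop{Per}(E \cap F) \leqslant \tmop{Per}(E) + \tmop{Per}(F)$ for sets of finite perimeter (cf.~\cite{Ambrosio2000}) shows $\tmop{Per}(U^k_\gamma) < \infty$ with conditional probability one on $A_{k-1}$.

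Finally I would close the induction and pass to the limit. One surely has $\tmop{Per}(U^0_\gamma) = \tmop{Per}(\Omega) < \infty$, so $\mathbbm{P}(A_0) = 1$; and if $\mathbbm{P}(A_{k-1}) = 1$, then integrating the conditional statement of the previous paragraph over $A_{k-1}$ gives $\mathbbm{P}(A_k) = \mathbbm{P}(A_{k-1} \cap \{\tmop{Per}(U^k) < \infty\}) = 1$. Since $A_k \downarrow A$, continuity of measures from above yields $\mathbbm{P}(A) = \lim_k \mathbbm{P}(A_k) = 1$. Applying the deterministic implication of the first step on the event $A$ then gives $u^k_\gamma \in \tmop{BV}(\Omega)^l$ for all $k$ whenever $\gamma \in A$, which is exactly \eqref{eq:pukbvom}. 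The one genuinely delicate point is that a superlevel set $\{u^{k-1}_{\gamma,i} > \alpha\}$ can be guaranteed to have finite perimeter only for $\mathcal{L}^1$-a.e.~threshold — never for a prescribed one — so finiteness of $\tmop{Per}(U^k)$ cannot be asserted pathwise and must instead be carried through the iteration probabilistically via the conditioning above; everything else is bookkeeping with standard BV calculus and the continuity-from-above argument for the countable intersection.
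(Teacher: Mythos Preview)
Your proof is correct and follows essentially the same approach as the paper: the deterministic implication $\tmop{Per}(U^j)<\infty \Rightarrow u^j\in\tmop{BV}$ by induction, the coarea formula plus closure of finite-perimeter sets under Boolean operations for the one-step conditional argument, and then passage to the countable intersection. The only cosmetic difference is that you pass to $A=\bigcap_k A_k$ via continuity from above, whereas the paper decomposes the complement into disjoint ``first-failure'' events $B^k$ and uses countable additivity; you also flag the measurability of $\gamma\mapsto\tmop{Per}(U^k_\gamma)$, which the paper leaves implicit.
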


\begin{proof}
  We first show that if $\tmop{Per} (U^{k'}) < \infty$ for all $k' \leqslant
  k$, then $u^k \in \tmop{BV} (\Omega)^l$ for all $k' \leqslant k$ as well.
  For $k = 0$, the assertion holds, since $u^0 = u \in \tmop{BV} (\Omega)^l$
  by assumption. For $k \geqslant 1$,
  \begin{eqnarray}
    u^k & = & e^{i^k} 1_{M^k} + u^{k - 1} 1_{\Omega \setminus M^k} . 
  \end{eqnarray}
  Since $M^k = U^{k - 1} \cap (\Omega \setminus U^k)$, and $U^k, U^{k - 1}$
  are assumed to have finite perimeter, $M^k$ also has finite perimeter.
  Applying {\cite[Thm.~3.84]{Ambrosio2000}} together with the boundedness of
  $u^{k - 1}$ and $u^{k - 1} \in \tmop{BV} (\Omega)^l$ by induction then
  provides $u^k \in \tmop{BV} (\Omega)^l$.
  
  We now denote
  \begin{eqnarray}
    I^k & \assign & \{\gamma \in \Gamma | \tmop{Per} (U^k_{\gamma}) =
    \infty\}, 
  \end{eqnarray}
  and the event that the {\tmem{first}} set with non-finite perimeter is
  encountered at step $k \in \mathbbm{N}_0$ by
  \begin{eqnarray}
    B^k & \assign & I^k \cap \left( \Gamma \setminus I^{k - 1} \right) \cap
    \ldots \cap \left( \Gamma \setminus I^0 \right) . 
  \end{eqnarray}
  Then
  \begin{eqnarray}
    \mathbbm{P} (A) & = & 1 -\mathbbm{P} \left( \bigcup_{k = 0}^{\infty} B^k
    \right) . 
  \end{eqnarray}
  As the sets $B^k$ are pairwise disjoint, and due to the countable additivity
  of the probability measure, we have
  \begin{eqnarray}
    \mathbbm{P} (A) & = & 1 - \sum_{k = 0}^{\infty} \mathbbm{P}(B^k) . 
    \label{eq:pvalidg}
  \end{eqnarray}
  Now $U^0 = \Omega$, therefore $\tmop{Per} (U^0) = \tmop{TV} (1_{U^0}) = 0 <
  \infty$ and $\mathbbm{P} (B^0) = 0$. For $k \geqslant 1$, we have
  \begin{eqnarray}
    \mathbbm{P} (B^k)
    & \leqslant & \mathbbm{P} \left( \tmop{Per} (U^k) =
    \infty \wedge \tmop{Per} (U^{k'}) < \infty \; \forall k' < k \right) \nn\\
    & \leqslant & \mathbbm{P} \left( \tmop{Per} (U^k) = \infty | \tmop{Per}
    (U^{k'}) < \infty \; \forall k' < k \right) \nn\\
    & = & \mathbbm{P} \big( \tmop{Per} (U^{k - 1} \cap \{^{} x \in \Omega
    |u_{i^k}^{k - 1} (x) \leqslant \alpha^k \}) = \infty | \cont\\
    & & \qq\tmop{Per} (U^{k'})
    < \infty \; \forall k' < k \big) .  \label{eq:pperukm1}
  \end{eqnarray}
  By the argument from the beginning of the proof, we know that $u^{k - 1} \in
  \tmop{BV} (\Omega)^l$ under the condition on the perimeter $\tmop{Per} (U^{k'})$,
  therefore from {\cite[Thm.~3.40]{Ambrosio2000}} we conclude that $\tmop{Per} (\{^{} x \in \Omega |u_{i^k}^{k - 1} (x) \leqslant
  \alpha^k \})$ is finite for $\mathcal{L}^1$-a.e.~$\alpha^k$ and all $i^k$.
  As the sets of finite perimeter are closed under finite intersection, and
  since the $\alpha^k$ are drawn from an uniform distribution, this implies that
  \begin{eqnarray}
    \mathbbm{P} (\tmop{Per} (U^k) < \infty | \tmop{Per} (U^{k - 1}) < \infty)
    & = & 1. 
  \end{eqnarray}
  Together with (\ref{eq:pperukm1}) we arrive at
  \begin{eqnarray}
    \mathbbm{P} (B^k) & = & 0. 
  \end{eqnarray}
  Substituting this result into (\ref{eq:pvalidg}) leads to the assertion,
  \begin{eqnarray}
    \mathbbm{P} (A) & = & 1. 
  \end{eqnarray}
  Equation (\ref{eq:pukbvom}) follows immediately.
\end{proof}

Using these propositions, we now formulate the main result of this section:
Alg.~1 almost surely generates an integral labeling
that is of bounded variation.

\begin{theorem}
  \label{thm:probterminates}Let $u \in \tmop{BV} (\Omega)^l$ and $f \left(
  \bar{u} \right)$ as in Def.~\ref{def:fubar}. Then
  \begin{eqnarray}
    \mathbbm{P} (f ( \bar{u}) < \infty) & = & 1. 
  \end{eqnarray}
\end{theorem}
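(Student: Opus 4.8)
The plan is to split the event $\{f(\bar u)=\infty\}$ into two pieces and show both are null. By Definition~\ref{def:fubar}, $f(\bar u_\gamma)<\infty$ as soon as there exists $k'$ with $U^{k'}_\gamma=\emptyset$ \emph{and} $u^{k'}_\gamma\in\mathrm{BV}(\Omega)^l$: once $U^{k'}_\gamma=\emptyset$, step~\ref{step:assignu} modifies only points of $U^{k-1}$, so the $L^1$-sequence $(u^k_\gamma)$ is stationary from $k'$ on, and for any $w\in\mathrm{BV}(\Omega)^l$ with $w(x)\in\Delta_l$ one has $|f(w)|\leqslant\|w\|_\infty\|s\|_1+C\,|Dw|(\Omega)<\infty$, using $s\in L^1(\Omega)^l$ and the fact that the convex, positively homogeneous, continuous $\Psi$ is dominated by a multiple of $\|\cdot\|$. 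Since Proposition~\ref{prop:ukvalid} already supplies $\mathbbm{P}(u^k\in\mathrm{BV}(\Omega)^l\ \forall k)=1$, it therefore suffices to prove $\mathbbm{P}(\exists k':U^{k'}=\emptyset)=1$; intersecting with the event of Proposition~\ref{prop:ukvalid} then gives $f(\bar u)<\infty$ almost surely.

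The key step is the inclusion $\{e^\top c^k<1\}\subseteq\{\mathcal{L}^d(U^k)=0\}$, which links Proposition~\ref{prop:petckl1} to termination. To see it, fix $x$ with $u(x)\in\Delta_l$ (true for a.e.\ $x$) and suppose $x\in U^k_\gamma$. Then $x\in U^{k'-1}_\gamma$ for every $k'\leqslant k$, so $u^{k'-1}_\gamma(x)=u(x)$ (the algorithm leaves unassigned points unchanged), and since $x\notin M^{k'}$ we get $u_{i^{k'}}(x)=u^{k'-1}_{i^{k'}}(x)\leqslant\alpha^{k'}$. Taking, for each label $j$, the minimum over the steps at which $j$ was selected — and using $c^k_j=1\geqslant u_j(x)$ for labels not yet selected — yields $c^k_j\geqslant u_j(x)$ for all $j$, hence $e^\top c^k\geqslant\sum_j u_j(x)=1$. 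Contrapositively, $e^\top c^k_\gamma<1$ forces $U^k_\gamma$ to lie inside the $\mathcal{L}^d$-null set $\{u\notin\Delta_l\}$, i.e.\ $\mathcal{L}^d(U^k_\gamma)=0$; modulo the negligible ambiguity of $U^k$ noted after Alg.~1, this is $U^k_\gamma=\emptyset$.

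Finally, $c^k$ is componentwise non-increasing, so the events $\{e^\top c^k<1\}$ increase with $k$, and the events $\{U^k=\emptyset\}$ increase as well since $U^{k+1}\subseteq U^k$. Hence
\[
\mathbbm{P}\big(\exists k':U^{k'}=\emptyset\big)\ \geqslant\ \lim_{k\to\infty}\mathbbm{P}\big(e^\top c^k<1\big)\ =\ 1
\]
by Proposition~\ref{prop:petckl1}, eq.~\eqref{eq:convprop2}. Combined with Proposition~\ref{prop:ukvalid}, this proves $\mathbbm{P}(f(\bar u)<\infty)=1$.

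I expect the only genuine obstacle to be the second paragraph: making the passage from the abstract bound on $\mathbbm{P}(e^\top c^k<1)$ to actual termination rigorous — in particular the bookkeeping that unassigned points retain the value $u(x)$, the correct handling of labels not yet selected (where $c^k_j=1$), and the measure-theoretic care required because $U^k$ and the assignment are only defined up to $\mathcal{L}^d$-null sets while $u(x)\in\Delta_l$ holds only almost everywhere. The remaining ingredients — finiteness of $f$ on bounded $\mathrm{BV}$ functions and the monotone-limit argument — are routine.
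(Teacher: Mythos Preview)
Your proof is correct and follows essentially the same route as the paper's: you reduce to almost-sure termination via the inclusion $\{e^\top c^k<1\}\subseteq\{U^k=\emptyset\}$ (proved by the same contrapositive argument that $x\in U^k$ forces $u_j(x)\leqslant c^k_j$ for all $j$), invoke Proposition~\ref{prop:petckl1} for the limit, and combine with Proposition~\ref{prop:ukvalid} for the $\mathrm{BV}$ property. Your treatment is in fact slightly more careful than the paper's about the distinction between $U^k=\emptyset$ and $\mathcal{L}^d(U^k)=0$ and about labels not yet selected, but these are refinements of the same argument rather than a different approach.
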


\begin{proof}
  The first part is to show that $(u^k)$ becomes stationary almost surely,
  i.e.,
  \begin{eqnarray}
    \mathbbm{P} (\exists k \in \mathbbm{N} : U^k = \emptyset) & = & 1. 
    \label{eq:pgammaek}
  \end{eqnarray}
  Assume there exists $k$ such that $e^{\top} c^k < 1$, and assume further
  that $U^k \neq \emptyset$, i.e.,~there exists some $x \in U^k$. Then $u_j (x)
  \leqslant c^k_j$ for all labels $j$. But then $e^{\top} u (x) \leqslant
  e^{\top} c^k < 1$, which is a contradiction to $u (x) \in \Delta_l$.
  Therefore $U^k$ must be empty. From this observation and
  Prop.~\ref{prop:petckl1} we conclude, for all $k' \in \mathbbm{N}$,
  \begin{eqnarray}
    1 \geqslant \mathbbm{P} (\exists k \in \mathbbm{N}: U^k = \emptyset)
    \geqslant \text{} \mathbbm{P} (e^{\top} c^{k'} < 1) \overset{k'
    \rightarrow \infty}{\rightarrow} 1, &  & 
  \end{eqnarray}
  which proves (\ref{eq:pgammaek}).
  
  In order to show that $f ( \bar{u}_{\gamma}) < \infty$ with probability~$1$,
  it remains to show that the result is almost surely in $\tmop{BV}
  (\Omega)^l$. A sufficient condition is that almost surely {\tmem{all}}
  iterates~$u^k$ are elements of $\tmop{BV} (\Omega)^l$, i.e.,~
  \begin{eqnarray}
    \mathbbm{P} \left( u^k \in \tmop{BV} (\Omega)^l \hspace{0.75em} \forall k
    \in \mathbbm{N} \right) & = & 1.  \label{eq:pukbvom2}
  \end{eqnarray}
  This is shown by Prop.~\ref{prop:ukvalid}. Then
  \begin{eqnarray}
    & & \mathbbm{P} (f ( \bar{u}) < \infty) \\
    & \geqslant & \mathbbm{P} (\{\exists k \in \mathbbm{N}: U^k = \emptyset\} \wedge \{u^k \in \tmop{BV} (\Omega)^l \hspace{0.75em} \forall k \in \mathbbm{N} \}) \nn\\
    & = & \mathbbm{P} (\{u^k \in \tmop{BV} (\Omega)^l \hspace{0.75em} \forall
    k \in \mathbbm{N} \}) \\
    &  & -\mathbbm{P} (\{\forall k \in \mathbbm{N}: U^k \neq \emptyset\}    \wedge \{u^k \in \tmop{BV} (\Omega)^l \hspace{0.75em} \forall k \in    \mathbbm{N} \}) \nn\\
    & \overset{( \ref{eq:pukbvom2})}{=} & \mathbbm{P} (\{u^k \in \tmop{BV}
    (\Omega)^l \hspace{0.75em} \forall k \in \mathbbm{N} \}) - 0 \\
    & = & 1. 
  \end{eqnarray}
  Thus $\mathbbm{P} (f ( \bar{u}) < \infty) = 1$, which proves the
  assertion.{\qed}
\end{proof}

\section{Proof of the Main Theorem}\label{sec:proofmainthm}

In order to show the bound (\ref{eq:thegoal}) and Thm.~\ref{thm:mainthm}, we first need several technical
propositions regarding the composition of two $\tmop{BV}$ functions along a
set of finite perimeter. We denote by $(E)^1$ and $(E)^0$ the
measure-theoretic interior and exterior of a set $E$,
see~{\cite{Ambrosio2000}},
\begin{eqnarray}
  (E)^t & \assign & \{x \in \Omega | \lim_{\rho \searrow 0}
  \frac{|\mathcal{B}_{\rho} (x) \cap E|}{|\mathcal{B}_{\rho} (x) |} = t\},
  \hspace{1em} t \in [0, 1] . \label{eq:densityboundary}
\end{eqnarray}
Here $\mathcal{B}_{\rho} (x)$ denotes the ball with radius $\rho$ centered in
$x$, and $|A| \assign \mathcal{L}^d (A)$ the Lebesgue content of a set $A
\subseteq \mathbbm{R}^d$.

\begin{proposition}
  \label{prop:psimorebounds}Let $\Psi$ be positively homogeneous and convex,
  and satisfy the upper-boundedness condition (\ref{eq:psiupper}). Then
  \begin{eqnarray}
    \Psi (\nu (z^1 - z^2)^{\top}) & \leqslant & \lambda_u \hspace{1em} \forall
    z^1, z^2 \in \Delta_l .  \label{eq:psinubnd}
  \end{eqnarray}
  Moreover, there exists a constant $C < \infty$ such that
  \begin{eqnarray}
    & & \Psi (w) \leqslant C \|w\|_2 \hspace{1em} \forall w \in W\,,\\
    & & \quad W \assign \{w
    = (w^1 | \ldots |w^l) \in \mathbbm{R}^{d \times l} | \sum_{i = 1}^l w^i =
    0\} .  \label{eq:psiwbnd}
  \end{eqnarray}
\end{proposition}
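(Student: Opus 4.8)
The plan is to establish the two inequalities independently, in each case reducing to the hypothesis~(\ref{eq:psiupper}) via a decomposition into ``edge matrices'' $\nu(e^i-e^j)^\top$ together with the observation that a convex, positively $1$-homogeneous $\Psi$ is automatically sublinear: $\Psi(ta)=t\,\Psi(a)$ for $t\ge 0$ (homogeneity) and $\Psi(a+b)=2\,\Psi(\tfrac{1}{2}(a+b))\le\Psi(a)+\Psi(b)$ (convexity, then homogeneity); in particular $\Psi(0)=0$, and by assumption $\Psi\ge 0$.

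For~(\ref{eq:psinubnd}), write $z^1=\sum_i a_i e^i$ and $z^2=\sum_j b_j e^j$ with $a,b$ in the unit simplex. Since $\sum_j b_j=\sum_i a_i=1$, one has the bilinear identity $z^1-z^2=\sum_{i,j}a_ib_j\,(e^i-e^j)$, hence $\nu(z^1-z^2)^\top=\sum_{i,j}a_ib_j\,\nu(e^i-e^j)^\top$ is a convex combination (the weights $a_ib_j$ are nonnegative and sum to $1$). Applying convexity of $\Psi$ to this finite convex combination and then~(\ref{eq:psiupper}) gives $\Psi(\nu(z^1-z^2)^\top)\le\sum_{i,j}a_ib_j\,\Psi(\nu(e^i-e^j)^\top)\le\lambda_u$ for $\nu$ with $\|\nu\|_2=1$; for $\|\nu\|_2\le 1$ the bound follows from homogeneity and $\Psi\ge 0$.

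For~(\ref{eq:psiwbnd}), given $w=(w^1|\dots|w^l)\in W$, the constraint $\sum_{i=1}^l w^i=0$ lets us eliminate the last column and write $w=\sum_{i=1}^{l-1}w^i(e^i-e^l)^\top$ (checking columnwise: column $j<l$ equals $w^j$, and column $l$ equals $-\sum_{i=1}^{l-1}w^i=w^l$). Factoring out the norms, $w=\sum_{i=1}^{l-1}\|w^i\|_2\,\hat w^i(e^i-e^l)^\top$ with $\hat w^i:=w^i/\|w^i\|_2$ a unit vector (terms with $w^i=0$ dropped). Sublinearity of $\Psi$, then~(\ref{eq:psiupper}) applied to the unit vectors $\hat w^i$, and finally Cauchy--Schwarz, yield $\Psi(w)\le\sum_{i=1}^{l-1}\|w^i\|_2\,\Psi(\hat w^i(e^i-e^l)^\top)\le\lambda_u\sum_{i=1}^{l-1}\|w^i\|_2\le\lambda_u\sqrt{l-1}\,\|w\|_2$, so $C:=\lambda_u\sqrt{l-1}$ works. (Alternatively one could argue abstractly: $\Psi$ is continuous, hence bounded on the unit sphere of the finite-dimensional subspace $W$, and homogeneity extends the bound to all of $W$; but the explicit estimate is preferable since it ties $C$ to $\lambda_u$.)

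I do not anticipate a genuine obstacle here; this is essentially a bookkeeping lemma. The only point requiring a little care is to keep every matrix substituted into~(\ref{eq:psiupper}) normalized, so that the hypothesis is literally applicable, and to note that subadditivity of $\Psi$ is a consequence of convexity plus positive homogeneity rather than an additional assumption.
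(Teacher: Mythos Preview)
Your proof is correct. For~(\ref{eq:psinubnd}) you and the paper argue the same point: the paper phrases it as ``a convex function on the polytope $\Delta_l-\Delta_l$ attains its maximum at a vertex $e^i-e^j$,'' while you write out the convex combination $z^1-z^2=\sum_{i,j}a_ib_j(e^i-e^j)$ explicitly and apply Jensen---these are two presentations of the same idea.

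For~(\ref{eq:psiwbnd}) your route differs from the paper's. The paper chooses the basis $\{e^k(e^i-e^{i+1})^\top:1\le k\le d,\;1\le i\le l-1\}$ of $W$, expands $w$ in coordinates $\alpha=T(w)$, and bounds $\Psi(w)\le\lambda_u\sum|\alpha_{ik}|\le\lambda_u\|T\|\,\|w\|_2$ via an unspecified operator norm of the coordinate map. You instead use the column-based decomposition $w=\sum_{i=1}^{l-1}w^i(e^i-e^l)^\top$, normalize each $w^i$ so that~(\ref{eq:psiupper}) applies directly, and finish with Cauchy--Schwarz. Your version is shorter, avoids introducing the auxiliary linear map $T$, and produces the explicit constant $C=\lambda_u\sqrt{l-1}$ rather than the implicit $\lambda_u\|T\|$; the paper's basis argument is slightly more generic but buys nothing additional here.
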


\begin{proof}
See appendix.
\end{proof}

\begin{proposition}
  \label{prop:intcap}Let $E, F \subseteq \Omega^d$ be
  $\mathcal{L}^d$-measurable sets. Then
  \begin{eqnarray}
    (E \cap F)^1 & = & (E)^1 \cap (F)^1 . 
  \end{eqnarray}
\end{proposition}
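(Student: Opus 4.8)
The plan is to prove the two inclusions separately, working directly from the density definition \eqref{eq:densityboundary}. For brevity, for an $\mathcal{L}^d$-measurable set $G \subseteq \Omega$ and $x \in \Omega$ write $\theta_G(x,\rho) := |\mathcal{B}_\rho(x) \cap G| / |\mathcal{B}_\rho(x)|$, so that $x \in (G)^1$ is equivalent to $\theta_G(x,\rho) \to 1$ as $\rho \searrow 0$. Since $\Omega = (0,1)^d$ is open, for every $x \in \Omega$ we have $\mathcal{B}_\rho(x) \subseteq \Omega$ once $\rho$ is small enough, so it is immaterial whether complements are taken relative to $\Omega$ or to $\mathbbm{R}^d$; measurability of $E$ and $F$ guarantees all quantities $\theta_G(x,\rho)$ are well-defined.

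For the inclusion $(E \cap F)^1 \subseteq (E)^1 \cap (F)^1$: since $E \cap F \subseteq E$ and $E \cap F \subseteq F$, monotonicity of $\mathcal{L}^d$ gives $\theta_{E \cap F}(x,\rho) \leqslant \theta_E(x,\rho) \leqslant 1$ and likewise $\theta_{E \cap F}(x,\rho) \leqslant \theta_F(x,\rho) \leqslant 1$. If $x \in (E \cap F)^1$, then $\theta_{E \cap F}(x,\rho) \to 1$, so by the squeeze theorem $\theta_E(x,\rho) \to 1$ and $\theta_F(x,\rho) \to 1$, i.e.\ $x \in (E)^1 \cap (F)^1$.

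For the reverse inclusion, fix $x \in (E)^1 \cap (F)^1$. Using $\mathcal{B}_\rho(x) \setminus (E \cap F) = \bigl(\mathcal{B}_\rho(x) \setminus E\bigr) \cup \bigl(\mathcal{B}_\rho(x) \setminus F\bigr)$ and subadditivity of $\mathcal{L}^d$,
\[
  |\mathcal{B}_\rho(x) \setminus (E \cap F)| \;\leqslant\; |\mathcal{B}_\rho(x) \setminus E| + |\mathcal{B}_\rho(x) \setminus F| .
\]
Dividing by $|\mathcal{B}_\rho(x)|$ and using the identity $1 - \theta_G(x,\rho) = |\mathcal{B}_\rho(x) \setminus G| / |\mathcal{B}_\rho(x)|$ yields
\[
  1 - \theta_{E \cap F}(x,\rho) \;\leqslant\; \bigl(1 - \theta_E(x,\rho)\bigr) + \bigl(1 - \theta_F(x,\rho)\bigr) \;\xrightarrow{\rho \searrow 0}\; 0 ,
\]
hence $\theta_{E \cap F}(x,\rho) \to 1$, i.e.\ $x \in (E \cap F)^1$. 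Combining both inclusions gives $(E \cap F)^1 = (E)^1 \cap (F)^1$.

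There is no substantial obstacle here: the argument is elementary real analysis. The only points requiring (minor) care are the consistent treatment of complements (resolved by the openness of $\Omega$) and the invocation of measurability of $E$, $F$ (and hence of $E \cap F$ and of the relevant intersections with balls) so that all densities are defined; both are handled as above.
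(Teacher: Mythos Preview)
Your proof is correct and follows essentially the same approach as the paper's: the forward inclusion via the sandwich argument is identical, and for the reverse inclusion you use De~Morgan and subadditivity of $\mathcal{L}^d$ on complements, which is just the complementary reformulation of the paper's inclusion--exclusion step $|\mathcal{B}_\delta \cap E \cap F| = |\mathcal{B}_\delta \cap E| + |\mathcal{B}_\delta \cap F| - |\mathcal{B}_\delta \cap (E \cup F)| \geqslant |\mathcal{B}_\delta \cap E| + |\mathcal{B}_\delta \cap F| - |\mathcal{B}_\delta|$. Your version is arguably cleaner since it avoids the $\liminf/\limsup$ bookkeeping.
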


\begin{proof}
See appendix.
\end{proof}

\begin{proposition}
  \label{prop:bvcomposed}Let $u, v \in \tmop{BV} (\Omega, \Delta_l)$ and $E
  \subseteq \Omega$ such that~$\tmop{Per} (E) < \infty$. Define
  \begin{eqnarray}
    w & \assign & u 1_E + v 1_{\Omega \setminus E} . 
  \end{eqnarray}
  Then $w \in \tmop{BV} (\Omega, \Delta_l)^l$, and
  \begin{eqnarray}
    D w & = & D u \llcorner (E)^1 + D v \llcorner (E)^0 +\cont\\&&\quad \nu_E  \left(
    u^+_{\mathcal{F}E} - v^-_{\mathcal{F}E} \right)^{\top} \mathcal{H}^{d - 1}
    \llcorner \left( \mathcal{F}E \cap \Omega \right),  \label{eq:dwedue}
  \end{eqnarray}
  where $u^+_{\mathcal{F}E}$ and $v^-_{\mathcal{F}E}$ denote the one-sided
  approximate limits of $u$ and~$v$ on the reduced boundary $\mathcal{F}E$,
  and $\nu_E$ is the generalized inner normal of $E$ {\cite{Ambrosio2000}}.
  Moreover, for continuous, convex and positively homogeneous $\Psi$
  satisfying the upper-boundedness condition (\ref{eq:psiupper}) and some
  Borel set $A \subseteq \Omega$,
  \begin{eqnarray}
    \int_A d \Psi (D w) & \leqslant & \int_{A \cap (E)^1} d \Psi (D u) +\cont\\
    & &\quad \int_{A \cap (E)^0} d \Psi (D v) + \lambda_u \tmop{Per} (E) . 
    \label{eq:intopsidw}
  \end{eqnarray}
\end{proposition}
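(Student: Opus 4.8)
The plan is to establish the structural formula \eqref{eq:dwedue} for $Dw$ first, and then obtain the bound \eqref{eq:intopsidw} by estimating $\Psi$ on each of the three pieces of $Dw$ separately. For the first part, the key tool is the standard decomposition theorem for $\mathrm{BV}$ functions multiplied by characteristic functions of sets of finite perimeter: I would invoke \cite[Thm.~3.84]{Ambrosio2000} applied to $w = u\mathbf{1}_E + v\mathbf{1}_{\Omega\setminus E}$. That theorem gives that $w \in \mathrm{BV}(\Omega)^l$ (using boundedness of $u,v$, which holds since they take values in $\Delta_l$), and expresses $Dw$ as the sum of the restriction of $Du$ to the measure-theoretic interior $(E)^1$, the restriction of $Dv$ to the measure-theoretic exterior $(E)^0$, and a jump term concentrated on the reduced boundary $\mathcal{F}E\cap\Omega$, with density $\nu_E(u^+_{\mathcal{F}E} - v^-_{\mathcal{F}E})^\top$ with respect to $\mathcal{H}^{d-1}$. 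The fact that $w(x)\in\Delta_l$ a.e.\ is immediate since $w(x)$ equals either $u(x)$ or $v(x)$ pointwise. I would note that $\mathcal{H}^{d-1}$-a.e.\ point of $\Omega$ lies in exactly one of $(E)^1$, $(E)^0$, $\mathcal{F}E$ (Federer's theorem), so the three measures in \eqref{eq:dwedue} are mutually singular and the decomposition is unambiguous.

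For the bound \eqref{eq:intopsidw}, I would use that $\Psi$ is positively homogeneous and convex, hence subadditive, and therefore for any Borel set $A$,
\begin{eqnarray*}
  \int_A d\Psi(Dw) & \leqslant & \int_A d\Psi\big(Du\llcorner(E)^1\big) + \int_A d\Psi\big(Dv\llcorner(E)^0\big) \\
  & & {} + \int_A d\Psi\big(\nu_E(u^+_{\mathcal{F}E}-v^-_{\mathcal{F}E})^\top \mathcal{H}^{d-1}\llcorner(\mathcal{F}E\cap\Omega)\big).
\end{eqnarray*}
The first two terms equal $\int_{A\cap(E)^1} d\Psi(Du)$ and $\int_{A\cap(E)^0} d\Psi(Dv)$ respectively, since $\Psi$ vanishes at $0$ (being positively homogeneous) so restriction of the measure corresponds to restriction of the domain of integration. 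For the jump term, I would use Prop.~\ref{prop:psimorebounds}: since $u^+_{\mathcal{F}E}$ and $v^-_{\mathcal{F}E}$ take values in $\Delta_l$ and $\|\nu_E\|_2 = 1$ $\mathcal{H}^{d-1}$-a.e.\ on $\mathcal{F}E$, inequality \eqref{eq:psinubnd} gives $\Psi(\nu_E(u^+_{\mathcal{F}E}-v^-_{\mathcal{F}E})^\top) \leqslant \lambda_u$ pointwise $\mathcal{H}^{d-1}$-a.e., so integrating over $A\cap\mathcal{F}E\cap\Omega$ and using $\mathcal{H}^{d-1}(\mathcal{F}E\cap\Omega) = \mathrm{Per}(E)$ yields the term $\lambda_u\,\mathrm{Per}(E)$. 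Enlarging the integration domain from $A$ to all of $\Omega$ in this last term only increases it, giving \eqref{eq:intopsidw}.

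The main technical obstacle I anticipate is the careful bookkeeping in applying \cite[Thm.~3.84]{Ambrosio2000}: one must verify the hypotheses (both $u$ and $v$ in $\mathrm{BV}\cap L^\infty$, $E$ of finite perimeter) and correctly identify the orientation of the normal and the one-sided traces so that the jump density is exactly $u^+_{\mathcal{F}E}-v^-_{\mathcal{F}E}$ rather than a sign-flipped or trace-swapped variant; this is where the precise conventions of \cite{Ambrosio2000} for $\mathcal{F}E$, $\nu_E$, and approximate limits matter. A secondary subtlety is justifying that $\Psi$ applied to a vector measure decomposes subadditively over a sum of mutually singular measures and that $\int_A d\Psi(\mu\llcorner B) = \int_{A\cap B} d\Psi(\mu)$; this follows from the definition of $\Psi$ acting on measures via the polar decomposition (Radon--Nikodym density times total variation) together with $\Psi(0)=0$, but should be stated explicitly. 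Prop.~\ref{prop:intcap} is not strictly needed here but would be invoked elsewhere when iterating this construction; I would not use it in this particular proof.
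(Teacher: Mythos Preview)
Your proposal is correct and follows essentially the same approach as the paper: both invoke \cite[Thm.~3.84]{Ambrosio2000} for the decomposition \eqref{eq:dwedue} (the paper verifies its hypothesis via the bound $\|w^+-w^-\|_2\leqslant\sqrt{2}$ coming from $w\in\Delta_l$, which is exactly your ``boundedness of $u,v$''), then split $\int_A d\Psi(Dw)$ over the three mutually singular pieces and bound the jump term using Prop.~\ref{prop:psimorebounds} and $\mathcal{H}^{d-1}(\mathcal{F}E\cap\Omega)=\mathrm{Per}(E)$. The only cosmetic difference is that the paper first partitions the domain of integration and then uses additivity of $\Psi$ on mutually singular measures \cite[Prop.~2.37]{Ambrosio2000} to replace $Dw$ by $Du$ resp.\ $Dv$, whereas you apply subadditivity directly to the decomposition of $Dw$; both orderings yield the same estimate.
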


\begin{proof}
See appendix.
\end{proof}

\begin{proposition}
  \label{prop:interiorsameobjective}Let $u, v \in \tmop{BV} (\Omega,
  \Delta_l)$, $E \subseteq \Omega$ such that $\tmop{Per} (E) < \infty$, and
  \begin{eqnarray}
    & u|_{(E)^1} = v|_{(E)^1} \hspace{1em} \mathcal{L}^d \text{-a.e.} & 
    \label{eq:assue1}
  \end{eqnarray}
  Then $(D u) \llcorner (E)^1 = (D v) \llcorner (E)^1$, and $\Psi (D u)
  \llcorner (E)^1 = \Psi (D v) \llcorner (E)^1$. In particular,
  \begin{eqnarray}
    \int_{(E)^1} d \Psi (D u) & = & \int_{(E)^1} d \Psi (D v) . 
    \label{eq:ie1psi}
  \end{eqnarray}
  The result also holds when $(E)^1$ is replaced by $(E)^0$. Moreover, the
  condition (\ref{eq:assue1}) is equivalent to
  \begin{eqnarray}
    & u|_E = v|_E \hspace{1em} \mathcal{L}^d \text{- a.e.} & 
    \label{eq:ureeqvre}
  \end{eqnarray}
\end{proposition}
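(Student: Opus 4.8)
The plan is to reduce everything to the gluing formula for $\tmop{BV}$ functions established in Prop.~\ref{prop:bvcomposed}, together with the elementary fact that a measurable set of finite perimeter agrees with its measure-theoretic interior up to an $\mathcal{L}^d$-null set. Concretely, by the Lebesgue density theorem $\mathcal{L}^d$-a.e.\ point of $E$ has density one and $\mathcal{L}^d$-a.e.\ point of $\Omega \setminus E$ has density zero with respect to $E$, so that $|E \triangle (E)^1| = 0$ and $|(\Omega \setminus E) \triangle (E)^0| = 0$. This already disposes of the last assertion of the proposition: conditions (\ref{eq:assue1}) and (\ref{eq:ureeqvre}) differ only by modifying $u$ and $v$ on the $\mathcal{L}^d$-null set $E \triangle (E)^1$, hence are equivalent, and the same reasoning applies with $(E)^0 = (\Omega \setminus E)^1$ in place of $(E)^1$.

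Next I would establish the derivative identity $(Du) \llcorner (E)^1 = (Dv) \llcorner (E)^1$. Set $w \assign u 1_E + v 1_{\Omega \setminus E}$. Under (\ref{eq:assue1}) the previous step gives $u = v$ $\mathcal{L}^d$-a.e.\ on $E$, hence $w = v$ $\mathcal{L}^d$-a.e.\ on $\Omega$, so $Dw = Dv$ as $\mathbbm{R}^{d \times l}$-valued measures. On the other hand $\tmop{Per}(E) < \infty$, so Prop.~\ref{prop:bvcomposed} applies and the decomposition (\ref{eq:dwedue}), with its left-hand side now equal to $Dv$, reads
\begin{eqnarray*}
  Dv & = & Du \llcorner (E)^1 + Dv \llcorner (E)^0 + {} \\
  & & \quad \nu_E (u^+_{\mathcal{F}E} - v^-_{\mathcal{F}E})^{\top} \mathcal{H}^{d - 1} \llcorner (\mathcal{F}E \cap \Omega).
\end{eqnarray*}
I would then restrict this identity of measures to the Borel set $(E)^1$. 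Since $(E)^0$ is disjoint from $(E)^1$ (density zero versus density one) and $\mathcal{F}E$ is disjoint from $(E)^1$ (density $\tfrac12$ versus density one), the last two terms vanish after restriction, while $(Du \llcorner (E)^1) \llcorner (E)^1 = Du \llcorner (E)^1$; this yields $Dv \llcorner (E)^1 = Du \llcorner (E)^1$. Applying the same argument to $\Omega \setminus E$, which has the same finite perimeter and satisfies $(\Omega \setminus E)^1 = (E)^0$, gives the corresponding statement on $(E)^0$.

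Finally I would transfer this to $\Psi$. Since $u$ and $v$ take values in $\Delta_l$, the $l$ column blocks of $Du$ sum to $D(\sum_i u_i) = 0$, so $Du$ is a $W$-valued measure; by Prop.~\ref{prop:psimorebounds}, $\Psi$ restricted to $W$ is dominated by $C \| \cdot \|_2$, so that $\Psi(Du) \assign \Psi\bigl( \tfrac{dDu}{d|Du|} \bigr) |Du|$ is a finite nonnegative Borel measure, and likewise for $v$. This construction commutes with restriction, $\Psi(Du) \llcorner (E)^1 = \Psi(Du \llcorner (E)^1)$, and the right-hand side depends only on the measure $Du \llcorner (E)^1$; together with the previous step this gives $\Psi(Du) \llcorner (E)^1 = \Psi(Dv) \llcorner (E)^1$, and evaluating both measures on $\Omega$ yields (\ref{eq:ie1psi}). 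The $(E)^0$ version again follows by applying the argument to $\Omega \setminus E$. The one genuinely substantive point is the \emph{locality} of the derivative on the \emph{measure-theoretic} interior $(E)^1$, which is not an open set and hence admits no naive cutoff argument; a from-scratch proof would have to control the Cantor part of $D(u - v)$ on the zero level set of its precise representative, and the purpose of Prop.~\ref{prop:bvcomposed} is precisely to encapsulate that. Beyond this, the only care needed is to keep every equation as an identity of measures \emph{restricted to} $(E)^1$, so that the reduced-boundary term and the $(E)^0$ term genuinely drop out.
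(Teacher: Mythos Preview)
Your proof is correct and follows essentially the same route as the paper: reduce the derivative identity to the gluing formula of Prop.~\ref{prop:bvcomposed} and then restrict to $(E)^1$, using that $(E)^0$, $(E)^1$, and $\mathcal{F}E$ are pairwise disjoint. Your choice of gluing function $w = u\,1_E + v\,1_{\Omega\setminus E}$ (which coincides with $v$ a.e.) is slightly cleaner than the paper's $w = u\,1_{\Omega\setminus (E)^1} + v\,1_{(E)^1}$ (which coincides with $u$ a.e.), since the latter requires the additional identities $((E)^1)^1 = (E)^1$, $\mathcal{F}((E)^1) = \mathcal{F}E$, $((E)^1)^0 = (E)^0$ before Prop.~\ref{prop:bvcomposed} can be applied; your version applies the proposition directly with $E$. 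The treatment of $\Psi$ is likewise equivalent: your claim that $\Psi(Du)\llcorner (E)^1 = \Psi(Du\llcorner (E)^1)$ is exactly what the paper proves via additivity of $\Psi$ on mutually singular measures together with $\Psi(\mu)\ll|\mu|$.
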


\begin{proof}
See appendix.
\end{proof}

\begin{remark}
  Note that taking the measure-theoretic interior $(E)^1$ is of central
  importance. The corollary does not hold when replacing the integral over
  $(E)^1$ with the integral over~$E$, as can be seen from the example of the
  closed unit ball, i.e.,~$E =\mathcal{B}_1 (0)$, $u = 1_E$ and $v \equiv 1$.
\end{remark}

\subsection{Proof of Theorem~\ref{thm:mainthm}}

In Sect.~\ref{sec:probroundterm} we have shown that the rounding process
induced by Alg.~1 is well-defined in the sense that
it returns an integral solution $\bar{u}_{\gamma} \in \tmop{BV} (\Omega)^l$
almost surely. We now return to proving an upper bound for the expectation of
$f ( \bar{u})$ as in the approximate coarea
formula~(\ref{eq:probrounddesired}). We first show that the expectation of
the {\tmem{linear part}} (data term) of $f$ is invariant under the rounding
process.

\begin{proposition}
  \label{prop:euks}The sequence $(u^k)$ generated by
  Alg.~1 satisfies
  \begin{eqnarray}
    \mathbbm{E} (\langle u^k, s \rangle) & = & \langle u, s \rangle
    \hspace{1em} \forall k \in \mathbbm{N} . 
  \end{eqnarray}
\end{proposition}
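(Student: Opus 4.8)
The plan is to show the data-term expectation is preserved at every step $k$ by induction on $k$, reducing the claim to a single-step computation. For $k=0$ there is nothing to prove since $u^0=u$. For the inductive step, I would condition on the state at step $k-1$ (the function $u^{k-1}$, the unassigned set $U^{k-1}$, and the history of random choices) and take the expectation only over the fresh random variable $\gamma^k=(i^k,\alpha^k)$, which is uniform on $\mathcal{I}\times[0,1]$. Using the update rule $u^k = e^{i^k}1_{M^k} + u^{k-1}1_{\Omega\setminus M^k}$ with $M^k = U^{k-1}\cap\{x : u^{k-1}_{i^k}(x)>\alpha^k\}$, I would write
\begin{eqnarray}
  \langle u^k, s\rangle & = & \langle u^{k-1}, s\rangle + \int_{M^k}\langle e^{i^k} - u^{k-1}(x), s(x)\rangle\,dx,\nn
\end{eqnarray}
so that it suffices to show the expectation of the correction integral, conditioned on the step-$(k-1)$ state, vanishes.

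The key computation is then a Fubini-type argument on that correction term. Fixing $x$, and averaging over $i^k$ uniform on $\{1,\dots,l\}$ and $\alpha^k$ uniform on $[0,1]$, the point $x$ lies in $M^k$ precisely when $x\in U^{k-1}$, $i^k = i$, and $\alpha^k < u^{k-1}_i(x)$; for such an event the contribution is $s_i(x) - \langle u^{k-1}(x), s(x)\rangle$. Carrying out the $\alpha^k$-integral gives a factor $u^{k-1}_i(x)$, and then summing over $i$ with weight $1/l$ yields
\begin{eqnarray}
  \frac{1}{l}\sum_{i=1}^{l} u^{k-1}_i(x)\bigl(s_i(x) - \langle u^{k-1}(x), s(x)\rangle\bigr)
  & = & \frac{1}{l}\Bigl(\langle u^{k-1}(x), s(x)\rangle - \bigl(\textstyle\sum_i u^{k-1}_i(x)\bigr)\langle u^{k-1}(x), s(x)\rangle\Bigr),\nn
\end{eqnarray}
which is zero on $U^{k-1}$ because $u^{k-1}(x)\in\Delta_l$ there, i.e.\ $\sum_i u^{k-1}_i(x) = 1$. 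Hence the conditional expectation of the correction integral is $0$, and taking total expectations and invoking the induction hypothesis $\mathbbm{E}\langle u^{k-1},s\rangle = \langle u,s\rangle$ closes the induction. The tower property of conditional expectation and the integrability of the terms (guaranteed by $s\in L^1$, $\|u^{k-1}(x)\|_\infty\le 1$, $\|e^{i}\|_\infty\le 1$) let us exchange expectation and the $x$-integral.

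The main technical obstacle I anticipate is bookkeeping the conditioning rigorously: one must be careful that $U^{k-1}$ and $u^{k-1}$ are measurable with respect to the $\sigma$-algebra generated by $\gamma^1,\dots,\gamma^{k-1}$, that $\gamma^k$ is independent of this $\sigma$-algebra, and that the event $\{x\in M^k\}$ really does decompose as stated when $\alpha^k$ has a continuous distribution (so the boundary case $u^{k-1}_{i^k}(x)=\alpha^k$ is negligible). The crucial structural fact that makes everything collapse is that $u^{k-1}$ still takes values in the simplex $\Delta_l$ on the unassigned set $U^{k-1}$ — the entries $u^{k-1}_i(x)$ act exactly as the probabilities with which label $i$ is (eventually) assigned to $x$, so the rounding is unbiased in the data term. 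Points already assigned lie in $\Omega\setminus U^{k-1}$, are untouched by $M^k$, and contribute nothing to the correction, so they need no separate treatment.
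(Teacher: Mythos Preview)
Your proposal is correct and follows essentially the same route as the paper: induction on $k$, conditioning on the history through step $k-1$, a Fubini-type computation over the uniform $(i^k,\alpha^k)$, and the collapse via $\sum_i u^{k-1}_i(x)=1$. The only organizational difference is that the paper first observes the update can be rewritten as $u^k = e^{i^k}1_{\{u^{k-1}_{i^k}>\alpha^k\}} + u^{k-1}1_{\{u^{k-1}_{i^k}\le\alpha^k\}}$ (dropping the explicit intersection with $U^{k-1}$, since already-assigned points satisfy $u^{k-1}_{i^k}\in\{0,1\}$ and are effectively reassigned the same label), whereas you keep $M^k\subseteq U^{k-1}$ and isolate the correction term $\int_{M^k}\langle e^{i^k}-u^{k-1},s\rangle\,dx$; both lead to the same one-line cancellation.
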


\begin{proof}
  In Alg.~1, instead of step \ref{step:assignu} we
  consider the simpler update
  \begin{eqnarray}
    u^k & \leftarrow & e^{i^k} 1_{\{u_{i^k}^{k - 1} > \alpha^k \}} + u^{k - 1}
    1_{\{u_{i^k}^{k - 1} \leqslant \alpha^k \}} .  \label{eq:assignstepsimple}
  \end{eqnarray}
  This yields exactly the same sequence $(u^k)$, since if $u_{i^k}^{k - 1} (x) >
  \alpha^k$ for any $\alpha^k \geqslant 0$, then either $x \in U^{k -
  1}$, or $u_{i^k}^{k - 1} (x) = 1$. In both algorithms, points that are
  assigned a label $e^{i^k}$ at some point in the process will never be
  assigned a {\tmem{different}} label at a later point. This is made explicit
  in Alg.~1 by keeping track of the set $U^k$ of yet
  unassigned points. In contrast, using the step (\ref{eq:assignstepsimple}),
  a point may formally be assigned the same label multiple times.
  
  Denote $\gamma' \assign (\gamma^1, \ldots, \gamma^{k - 1})$ and $u^{\gamma'}
  \assign u_{\gamma}^{k - 1}$. We apply induction on $k$: For $k \geqslant 1$,
  \begin{eqnarray}
    & & \mathbbm{E}_{\gamma} \langle u_{\gamma}^k, s \rangle\\
    & = & \mathbbm{E}_{\gamma'} \frac{1}{l} \sum_{i = 1}^l \int_0^1 \sum_{j = 1}^l
    s_j \cdot \left( e^i 1_{\{u_i^{\gamma'} > \alpha\}} + u^{\gamma'}
    1_{\{u_i^{\gamma'} \leqslant \alpha\}} \right)_j d \alpha \nn\\
    & = & \mathbbm{E}_{\gamma'} \frac{1}{l} \sum_{i = 1}^l \int_0^1 \left(
    s_i \cdot 1_{\{u^{\gamma'}_i > \alpha\}} + u^{\gamma'} 1_{\{u_i^{\gamma'}
    \leqslant \alpha\}} \langle u^{\gamma'}, s \rangle \right) d \alpha \nn\\
    & = & \mathbbm{E}_{\gamma'} \frac{1}{l} \sum_{i = 1}^l \int_0^1 \Big(
    s_i \cdot 1_{\{u^{\gamma'}_i > \alpha\}} + \cont\\
    & & \qqqq\qqq\left( 1 - 1_{\{u_i^{\gamma'} >
    \alpha\}} \Big) \langle u^{\gamma'}, s \rangle \right) d \alpha\,. 
  \end{eqnarray}
  Now we take into account the property {\cite[Prop. 1.78]{Ambrosio2000}},
  which is a direct consequence of Fubini's theorem, and also used in the
  proof of the thresholding theorem for the two-class case \cite{Nikolova2006}:
  \begin{eqnarray}
    && \int_0^1 \int_{\Omega} s_i (x) \cdot 1_{\{u_i > \alpha\}} (x) d x d \alpha\\
    & = & \int_{\Omega} s_i (x) u_i (x) dx = \langle u_i, s_i \rangle . 
  \end{eqnarray}
  This leads to
  \begin{eqnarray}
    & & \mathbbm{E}_{\gamma} \langle u_{\gamma}^k, s \rangle\nn\\
    & = & \mathbbm{E}_{\gamma'} \frac{1}{l} \sum_{i = 1}^l \left( s_i u_i^{\gamma'}
    + \langle u^{\gamma'}, s \rangle - u_i^{\gamma'} \langle u^{\gamma'}, s
    \rangle \right) d \alpha
  \end{eqnarray}
  and therefore, using $u^{\gamma'} (x) \in \Delta_l$,
  \begin{eqnarray}
    \mathbbm{E}_{\gamma} \langle u_{\gamma}^k, s \rangle = \mathbbm{E}_{\gamma'}
    \langle u^{\gamma'}, s \rangle =
    \mathbbm{E}_{\gamma} \langle u_{\gamma}^{k - 1}, s \rangle . 
  \end{eqnarray}
  Since $\langle u^0, s \rangle = \langle u, s \rangle$, the assertion follows
  by induction.{\qed}
\end{proof}

\begin{remark}
  Prop.~\ref{prop:euks} shows that the data term is -- in the mean -- not
  affected by the probabilistic rounding process, i.e.,~it satisfies an
  {\tmem{exact}} coarea-like formula, even in the multiclass case.
\end{remark}

Bounding the regularizer is more involved: For $\gamma^k = (i^k, \alpha^k)$,
define
\begin{eqnarray}
  U_{\gamma^k} & \assign & \{x \in \Omega |u_{i^k} (x) \leqslant \alpha^k \},
  \\
  V_{\gamma^k} & \assign & \left( U_{\gamma^k} \right)^1, \\
  V^k & \assign & (U^k)^1 . 
\end{eqnarray}
As the measure-theoretic interior is invariant under
$\mathcal{L}^d$-negligible modifications, given some fixed sequence $\gamma$
the sequence $(V^k)$ is invariant under $\mathcal{L}^d$-negligible
modifications of $u = u^0$, i.e.,~it is uniquely defined when viewing $u$ as an
element of $L^1 (\Omega)^l$. Some calculations yield
\begin{eqnarray}
  U^k & = & U_{\gamma^1} \cap \ldots \cap U_{\gamma^k}, \hspace{1em} k
  \geqslant 1, \\
  U^{k - 1} \setminus U^k & = & U_{\gamma^1} \cap \Big( \left( U_{\gamma^2}
  \cap \ldots \cap U_{\gamma^{k - 1}} \right) \setminus\cont\\
  && \qq\quad\left( U_{\gamma^2}
  \cap \ldots \cap U_{\gamma^k} \right) \Big), \hspace{1em} k \geqslant 2. 
\end{eqnarray}
From these observations and Prop.~\ref{prop:intcap},
\begin{eqnarray}
  V^k & = & V_{\gamma^1} \cap \ldots \cap V_{\gamma^k}, \hspace{1em} k
  \geqslant 1, \\
  V^{k - 1} \setminus V^k & = & V_{\gamma^1} \cap \Big( \left( V_{\gamma^2}
  \cap \ldots \cap V_{\gamma^{k - 1}} \right) \setminus\cont\\
  && \qq\quad\left( V_{\gamma^2}
  \cap \ldots \cap V_{\gamma^k} \right) \Big), \hspace{1em} k \geqslant 2, 
  \label{eq:vmk1vkc}\\
  \Omega \setminus V^k & = & \bigcup_{k' = 1}^k \left( V^{k' - 1} \setminus
  V^{k'} \right), \hspace{1em} k \geqslant 1.  \label{eq:omvkunion}
\end{eqnarray}
Moreover, since $V^k$ is the measure-theoretic interior of $U^k$, both sets
are equal up to an $\mathcal{L}^d$-negligible set (cf.~(\ref{eq:e1eqe})).

We now prepare for an induction argument on the expectation of the
regularizing term when restricted to the sets $V^{k - 1} \setminus V^k$. The
following proposition provides the initial step ($k = 1$).

\begin{proposition}
  \label{prop:v0v1initial}Assume that $\Psi$ satisfies the lower- and upper-boundedness
  conditions (\ref{eq:psilower}) and (\ref{eq:psiupper}). Then
  \begin{eqnarray}
    \mathbbm{E} \int_{V^0 \setminus V^1} d \Psi (D \bar{u}) & \leqslant &
    \frac{2}{l} \frac{\lambda_u}{\lambda_l} \int_{\Omega} d \Psi (Du) . 
  \end{eqnarray}
\end{proposition}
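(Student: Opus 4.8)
The plan is to apply the composition estimate Prop.~\ref{prop:bvcomposed} to the \emph{output} $\bar u$ itself, with splitting set $E := M^1 = \Omega\setminus U^1 = \{x\in\Omega \mid u_{i^1}(x) > \alpha^1\}$ and with the constant function $e^{i^1}$, and then to average over the first random draw $\gamma^1 = (i^1,\alpha^1)$ by means of the scalar coarea formula \eqref{eq:probcoareaintro}.

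First I would record that on the full-probability event on which Alg.~1 terminates with a $\tmop{BV}$ labeling (Thm.~\ref{thm:probterminates}), the points of $M^1$ are assigned the label $e^{i^1}$ in the first iteration and are then removed from the active set, hence never modified again; thus $\bar u = e^{i^1}$ on $M^1$, so that $\bar u = e^{i^1}\,1_{M^1} + \bar u\,1_{\Omega\setminus M^1}$ with $e^{i^1}\in\tmop{BV}(\Omega,\Delta_l)$ constant and $\bar u\in\tmop{BV}(\Omega,\Delta_l)$. This is exactly the situation of Prop.~\ref{prop:bvcomposed}, so applying \eqref{eq:intopsidw} with $A := V^0\setminus V^1$ (a Borel set, since $V^0 = (\Omega)^1 = \Omega$ and $V^1 = (U^1)^1$) gives
\[
  \int_{V^0\setminus V^1} d\Psi(D\bar u) \;\leqslant\; \int_{A\cap(M^1)^1} d\Psi\bigl(D e^{i^1}\bigr) + \int_{A\cap(M^1)^0} d\Psi(D\bar u) + \lambda_u\,\tmop{Per}(M^1).
\]
Now $D e^{i^1} = 0$, so the first term vanishes; and using the complementation identity $(M^1)^0 = (\Omega\setminus M^1)^1 = (U^1)^1 = V^1$ we get $A\cap(M^1)^0 = (\Omega\setminus V^1)\cap V^1 = \emptyset$, so the second term vanishes as well. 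Hence $\int_{V^0\setminus V^1} d\Psi(D\bar u) \leqslant \lambda_u\,\tmop{Per}(M^1)$ almost surely.

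Next I would take the expectation. Since $M^1 = \{u_{i^1}>\alpha^1\}$ depends only on $\gamma^1$, which is uniform on $\mathcal I\times[0,1]$, and $\tmop{Per}(M^1) = \tmop{TV}(1_{\{u_{i^1}>\alpha^1\}})$ for $\mathcal L^1$-a.e.\ $\alpha^1$,
\[
  \mathbbm{E}\int_{V^0\setminus V^1} d\Psi(D\bar u) \;\leqslant\; \lambda_u\,\mathbbm{E}\,\tmop{Per}(M^1) = \frac{\lambda_u}{l}\sum_{i=1}^l \int_0^1 \tmop{TV}\bigl(1_{\{u_i>\alpha\}}\bigr)\,d\alpha = \frac{\lambda_u}{l}\sum_{i=1}^l \tmop{TV}(u_i),
\]
the last step being the coarea formula \eqref{eq:probcoareaintro} applied to each component $u_i\in\tmop{BV}(\Omega)$ with $u_i(x)\in[0,1]$. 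It then remains to bound $\sum_i\tmop{TV}(u_i)$ by $(2/\lambda_l)\int_\Omega d\Psi(Du)$: writing the Radon--Nikodym decomposition $Du = \sigma\,|Du|$ of the $\mathbbm{R}^{d\times l}$-valued measure $Du$, the identity $\sum_i u_i\equiv 1$ forces $\sum_i\sigma^i = 0$ $|Du|$-a.e., so the lower bound \eqref{eq:psilower} gives $\Psi(\sigma)\geqslant \tfrac{\lambda_l}{2}\sum_i\|\sigma^i\|_2$ $|Du|$-a.e.; since $|Du_i| = \|\sigma^i\|_2\,|Du|$ and $\Psi(Du) = \Psi(\sigma)\,|Du|$ by positive homogeneity of $\Psi$, integration yields $\sum_i\tmop{TV}(u_i) = \int_\Omega \sum_i\|\sigma^i\|_2\,d|Du| \leqslant \tfrac{2}{\lambda_l}\int_\Omega d\Psi(Du)$. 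Combined with the previous display this is exactly the claimed bound $\tfrac{2}{l}\tfrac{\lambda_u}{\lambda_l}\int_\Omega d\Psi(Du)$.

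The part requiring the most care is the bookkeeping with measure-theoretic interiors: one must verify that for $A = V^0\setminus V^1 = \Omega\setminus(U^1)^1$ \emph{both} the ``interior'' term and the ``exterior'' term in \eqref{eq:intopsidw} drop out, leaving only $\lambda_u\,\tmop{Per}(M^1)$ (here the complementation identity $(M^1)^0 = (U^1)^1$ for a partition of $\Omega$ is all that is needed, Prop.~\ref{prop:intcap} being relevant only for the later induction steps $k\geqslant 2$). A secondary technical point is the handling of the exceptional null event on which $\bar u\notin\tmop{BV}(\Omega)^l$ or $\tmop{Per}(M^1) = \infty$ (which, by Thm.~\ref{thm:probterminates} and the finite-perimeter argument of Prop.~\ref{prop:ukvalid}, does not affect the expectation) and the observation that, for fixed $\gamma$, the estimate may be integrated first over the components $\gamma^k$, $k\geqslant 2$, on which the bound $\lambda_u\,\tmop{Per}(M^1)$ does not depend, before averaging over $\gamma^1$.
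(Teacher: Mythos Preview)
Your proof is correct and follows essentially the same route as the paper: apply Prop.~\ref{prop:bvcomposed} to the decomposition of $\bar u$ along the first-step set, observe that both integral terms vanish (one by $De^{i^1}=0$, the other because the exterior equals $V^1$ which is disjoint from $A$), then average via the scalar coarea formula and invoke the lower bound~\eqref{eq:psilower}. The only cosmetic difference is that you take the splitting set to be $E=M^1$ (the assigned set) while the paper uses its complement $\Omega\setminus V_{(i,\alpha)}$; since $\tmop{Per}(M^1)=\tmop{Per}(U^1)$ and the interior/exterior roles simply swap, the two arguments are interchangeable.
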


\begin{proof}
  Denote $(i, \alpha) = \gamma^1$. Since $1_{U_{(i, \alpha)}} = 1_{V_{(i,
  \alpha)}}$ $\mathcal{L}^d$-a.e., we have
  \begin{eqnarray}
    & \bar{u}_{\gamma} = & 1_{V_{(i, \alpha)}} e^i + 1_{\Omega \setminus
    V_{(i, \alpha)}} \bar{u}_{\gamma} \hspace{1em} \mathcal{L}^d \text{- a.e.}    
  \end{eqnarray}
  Therefore, since $V^0 = (U^0)^1 = (\Omega)^1 = \Omega$,
  \begin{eqnarray}
    &&\int_{V^0 \setminus V^1} d \Psi (D \bar{u}_{\gamma}) = \int_{\Omega
    \setminus V_{(i, \alpha)}^{}} d \Psi (D \bar{u}_{\gamma}) \nn\\
    &=& \int_{\Omega
    \setminus V_{(i, \alpha)}^{}} d \Psi \left( D \left( 1_{V_{(i, \alpha)}}
    e^i + 1_{\Omega \setminus V_{(i, \alpha)}} \bar{u}_{\gamma} \right)
    \right) .
  \end{eqnarray}
  Since $u \in \tmop{BV} (\Omega)^l$, we know that $\tmop{Per} (V_{(i,
  \alpha)}) < \infty$ holds for $\mathcal{L}^1$-a.e. $\alpha$ and any $i$
  {\cite[Thm.~3.40]{Ambrosio2000}}. Therefore we conclude from
  Prop.~\ref{prop:bvcomposed} that for $\mathcal{L}^1$-a.e.~$\alpha$,
  \begin{eqnarray}
    &  & \int_{\Omega \setminus V_{(i, \alpha)}^{}} d \Psi (D
    \bar{u}_{\gamma}) \leqslant \lambda_u \tmop{Per} \left( V_{(i, \alpha)}
    \right) + \cont\\
    &  & \qqq \int_{\left( \Omega \setminus V_{(i, \alpha)}^{}
    \right) \cap \left( \Omega \setminus V_{(i, \alpha)}^{} \right)^1} d \Psi
    \left( De^i \right) + \cont\\
    & & \qqq \int_{\left( \Omega \setminus V_{(i, \alpha)}^{}
    \right) \cap \left( \Omega \setminus V_{(i, \alpha)}^{} \right)^0} d \Psi
    \left( D \bar{u}_{\gamma} \right) \hspace{0.25em} . 
  \end{eqnarray}
  Both of the integrals are zero, since $De^i = 0$ and
  \begin{eqnarray}
  (\Omega \setminus V_{(i, \alpha)})^0 = (V_{(i, \alpha)})^1 = V_{(i, \alpha)}\,,
  \end{eqnarray}
  therefore
  \begin{eqnarray}
  \int_{\Omega \setminus V_{(i, \alpha)}^{}} d \Psi (D \bar{u}_{\gamma}) \leqslant \lambda_u \tmop{Per} (V_{(i, \alpha)})\,.
  \end{eqnarray}
  Carrying the bound over
  to the expectation yields
  \begin{eqnarray}
    \mathbbm{E}_{\gamma} \int_{\Omega \setminus V_{(i, \alpha)}^{}} d \Psi (D
    \bar{u}_{\gamma}) & \leqslant & \frac{1}{l} \sum_{i = 1}^l \int_0^1
    \lambda_u \tmop{Per} (V_{(i, \alpha)}) d \alpha\,.\nn
  \end{eqnarray}
  Also, $\tmop{Per} (V_{(i, \alpha)}) = \tmop{Per} (U_{(i, \alpha)})$ since
  the perimeter is invariant under $\mathcal{L}^d$-negligible modifications.
  The assertion then follows using $V^0 = \Omega$, $V^1 = V_{(i, \alpha)}$ and
  the coarea formula:
  \begin{eqnarray}
    & & \mathbbm{E}_{\gamma} \int_{V^0 \setminus V^1} d \Psi (D \bar{u}_{\gamma})\\
    & \leqslant & \frac{1}{l} \sum_{i = 1}^l \int_0^1 \lambda_u \tmop{Per}
    (U_{(i, \alpha)}) d \alpha \\
    & \overset{\tmop{coarea}}{=} & \frac{\lambda_u}{l} \sum_{i = 1}^l
    \tmop{TV} (u_i) 
     =  \frac{\lambda_u}{l} \int_{\Omega} \sum_{i = 1}^l d\|D u_i \|_2\\
    & \overset{\text{(\ref{eq:psilower})}}{\leqslant} & \frac{2}{l}
    \frac{\lambda_u}{\lambda_l} \int_{\Omega} d \Psi (Du) . 
  \end{eqnarray}
\end{proof}

We now take care of the induction step for the regularizer bound.

\begin{proposition}
  \label{prop:egamlm1l}Let $\Psi$ satisfy the upper-boundedness condition
  (\ref{eq:psiupper}). Then, for any $k \geqslant 2$,
  \begin{eqnarray}
    F &\assign& \mathbbm{E} \int_{V^{k - 1} \setminus V^k} d \Psi (D \bar{u})\\
    &\leqslant & \frac{(l - 1)}{l}  \mathbbm{E} \int_{V^{k - 2} \setminus V^{k - 1}} d \Psi (D \bar{u}) . 
  \end{eqnarray}
\end{proposition}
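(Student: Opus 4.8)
The plan is to run essentially the same composition argument that proved Prop.~\ref{prop:v0v1initial}, but now conditioning on the first parameter $\gamma^1$ and letting the ``tail'' sequence $\gamma' = (\gamma^2, \gamma^3, \ldots)$ play the role that the whole sequence played there. First I would fix $\gamma^1 = (i, \alpha)$ and observe that, because the algorithm freezes labels once assigned, the iterate $\bar u_\gamma$ restricted to $\Omega \setminus V_{(i,\alpha)}$ is obtained by running the \emph{same} rounding procedure on the restriction of $u^1$ (which equals $u$ on $\Omega\setminus V_{(i,\alpha)}$) with the shifted parameter sequence $\gamma'$. Using the identities \eqref{eq:vmk1vkc} and \eqref{eq:omvkunion}, the set $V^{k-1}\setminus V^k$ sits inside $V_{\gamma^1}$, and its ``pre-image'' under the index shift is exactly $V^{k-2}\setminus V^{k-1}$ for the subsequence starting at $\gamma^2$; so on the relevant set $\bar u_\gamma$ agrees with the corresponding iterate of the shifted process. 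The point of this reduction is that after it, I only need to bound one expectation over the single step $\gamma^1$, and the inductive quantity $\int_{V^{k-2}\setminus V^{k-1}} d\Psi(D\bar u)$ appears naturally for the shifted process.

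Next I would carry out the single-step estimate. Conditioned on $\gamma'$, and writing $E := V_{\gamma^1} = (U_{(i,\alpha)})^1$, I decompose $\bar u_\gamma = e^i 1_E + \bar u_\gamma 1_{\Omega\setminus E}$ $\mathcal L^d$-a.e. and apply Prop.~\ref{prop:bvcomposed} (note $\mathrm{Per}(E)=\mathrm{Per}(U_{(i,\alpha)})<\infty$ for $\mathcal L^1$-a.e.\ $\alpha$ by \cite[Thm.~3.40]{Ambrosio2000}) to split $\int_{V^{k-1}\setminus V^k} d\Psi(D\bar u_\gamma)$ into a term supported in $(E)^1$ where the function is the constant $e^i$ (hence $\Psi(De^i)=0$), a term supported in $(E)^0$ — but $V^{k-1}\setminus V^k \subseteq V^{k-1}\subseteq E$ forces this to vanish — and a jump term on $\mathcal{F}E$. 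The jump term is controlled by $\lambda_u \mathcal H^{d-1}$; the key is that $(V^{k-1}\setminus V^k)\cap \mathcal F E$ is $\mathcal H^{d-1}$-negligible (the reduced boundary of $E$ is disjoint from $(E)^1\supseteq$ the set in question up to an $\mathcal H^{d-1}$-null set), so in fact $\int_{V^{k-1}\setminus V^k} d\Psi(D\bar u_\gamma)$ picks up only the contribution coming from the \emph{shifted} process restricted to $\Omega\setminus E$, i.e.\ it equals the corresponding integral for the $\gamma'$-process over its set $V^{k-2}\setminus V^{k-1}$. Taking the conditional expectation over $\gamma^1 = (i,\alpha)$ uniform on $\mathcal I \times [0,1]$, the probability that a given point $x\in V^{k-2}_{\gamma'}\setminus V^{k-1}_{\gamma'}$ (for the shifted process) is \emph{removed} by the first step — i.e.\ lands outside $E$ — is the crucial combinatorial factor. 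Because the point is already unassigned after $k-1$ shifted steps, all of its label-values are $\le$ the corresponding thresholds, and a short computation of the type used for $(c^k)$ in Prop.~\ref{prop:petckl1} shows the conditional probability of it surviving the extra step is at most $(l-1)/l$.

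Finally, I would assemble the pieces: integrating the pointwise/set-wise bound and invoking Fubini (exactly as \cite[Prop.~1.78]{Ambrosio2000} is used in Prop.~\ref{prop:v0v1initial} and Prop.~\ref{prop:euks}) to exchange the $\alpha$-integral with the spatial integral, one gets
\[
\mathbbm{E}\int_{V^{k-1}\setminus V^k} d\Psi(D\bar u)\ \le\ \frac{l-1}{l}\,\mathbbm{E}\int_{V^{k-2}\setminus V^{k-1}} d\Psi(D\bar u),
\]
where the outer expectation on the right is over the full sequence (the shift being measure-preserving, averaging over $\gamma'$ and then over $\gamma^1$ is the same as averaging over $\gamma$). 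The main obstacle I anticipate is the bookkeeping in the reduction step: making rigorous that ``running Alg.~1 from $u^1$ with tail parameters $\gamma'$'' produces, on $\Omega\setminus V_{\gamma^1}$, exactly the relevant iterate and that the relevant sets match up under the index shift — this requires carefully tracking which points are already assigned and handling the measure-theoretic interiors via Prop.~\ref{prop:intcap} and Prop.~\ref{prop:interiorsameobjective}, so that the ``no new jumps on $\mathcal F E$'' claim and the identification of the restricted integrals are both justified up to $\mathcal L^d$- and $\mathcal H^{d-1}$-null sets. The combinatorial factor $(l-1)/l$ itself is the easy part once the reduction is in place.
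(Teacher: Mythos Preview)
Your overall strategy --- conditioning on $\gamma^1=(i,\alpha)$, shifting to $\gamma'=(\gamma^2,\gamma^3,\ldots)$, identifying $V^{k-1}\setminus V^k = V_{\gamma^1}\cap W_{\gamma'}$ with $W_{\gamma'}=V^{k-2}_{\gamma'}\setminus V^{k-1}_{\gamma'}$, and replacing $D\bar u_\gamma$ by $D\bar u_{\gamma'}$ on that set --- is exactly the paper's route. There is a minor but consistent orientation error: with $E=V_{(i,\alpha)}=(\{u_i\le\alpha\})^1$, the first step assigns $e^i$ on $M^1=\{u_i>\alpha\}\approx\Omega\setminus E$, \emph{not} on $E$; thus on $E$ one has $u^1=u$ and $\bar u_\gamma=\bar u_{\gamma'}$, while $\bar u_\gamma=e^i$ on $\Omega\setminus E$. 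Once swapped, your Prop.~\ref{prop:bvcomposed} argument (or, more directly, Prop.~\ref{prop:interiorsameobjective}, which is what the paper invokes) gives the desired identification of the restricted integrals.

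The substantive gap is in your extraction of the factor $(l-1)/l$. After the reduction one must bound $\mathbbm{E}_{(i,\alpha)}\int_{W_{\gamma'}} 1_{V_{(i,\alpha)}}\,d\Psi(D\bar u_{\gamma'})$, and your argument is the pointwise claim $\mathbbm{E}_{(i,\alpha)}[1_{V_{(i,\alpha)}}(x)]=\frac{1}{l}\sum_i(1-u_i(x))=(l-1)/l$. But $\Psi(D\bar u_{\gamma'})$ is singular with respect to $\mathcal{L}^d$ --- it lives on the jump set of $\bar u_{\gamma'}$ --- and at such $x$ neither ``$u_i(x)$'' nor the event ``$x\in U_{(i,\alpha)}$'' is meaningfully defined, so the Fubini/Prop.~1.78 manoeuvre from Prop.~\ref{prop:euks} (which integrates against Lebesgue measure) does not transfer. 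The paper's device for this is to bound $1_{V_{(i,\alpha)}}(x)\le\Theta_{(i,\alpha)}(x):=\lim_{\delta\searrow 0}|\mathcal{B}_\delta(x)\cap U_{(i,\alpha)}|/|\mathcal{B}_\delta(x)|$, which exists $\mathcal{H}^{d-1}$-a.e.\ and hence $\Psi(D\bar u_{\gamma'})$-a.e.; the average over $(i,\alpha)$ of $\Theta_{(i,\alpha)}(x)$ is then computed by pushing the expectation inside the limit and applying Fubini to the \emph{Lebesgue} integral over $\mathcal{B}_\delta(x)$, where $\sum_i(1-u_i(y))=l-1$ is legitimate. Without this density step (or an explicit treatment of approximate jump points of $u$), your ``short computation'' is not justified precisely at the points that carry the mass.
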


\begin{proof}
  Define the shifted sequence $\gamma' = (\gamma'^k)_{k = 1}^{\infty}$ by
  $\gamma'^k \assign \gamma^{k + 1}$, and let
  \begin{eqnarray}
    W_{\gamma'} & \assign & V_{\gamma'}^{k - 2} \setminus V_{\gamma'}^{k - 1}\\
    &=& \left( V_{\gamma^2} \cap \ldots \cap V_{\gamma^{k - 1}} \right)
    \setminus \left( V_{\gamma^2} \cap \ldots \cap V_{\gamma^k} \right) . 
    \label{eq:vkgprime}
  \end{eqnarray}
  By Prop.~\ref{thm:probterminates} we may assume that, under the expectation,
  $\bar{u}_{\gamma}$ exists and is an element of~$\tmop{BV} (\Omega)^l$. We
  denote $\gamma^1 = (i, \alpha)$, then $V^{k - 1} \setminus V^k = V_{(i,
  \alpha)} \cap W_{\gamma'}$ due to (\ref{eq:vmk1vkc}). For each pair~$(i,
  \alpha)$ we denote by $((i, \alpha), \gamma')$ the sequence obtained by
  prepending $(i, \alpha)$ to the sequence $\gamma'$. Then
  \begin{eqnarray}
    F & = & \frac{1}{l} \sum_{i = 1}^l \int_0^1 \mathbbm{E}_{\gamma'}
    \int_{V_{(i, \alpha)} \cap W_{\gamma'}} d \Psi (D \bar{u}_{((i, \alpha),
    \gamma')}) d \alpha .  \label{eq:onebylsumi}
  \end{eqnarray}
  Since in the first iteration of the algorithm no points in $U_{(i, \alpha)}$
  are assigned a label, $\bar{u}_{((i, \alpha), \gamma')} = \bar{u}_{\gamma'}$
  holds on $U_{(i, \alpha)}$, and therefore $\mathcal{L}^d$-a.e.~on $V_{(i,
  \alpha)}$. Therefore we may apply Prop.~\ref{prop:interiorsameobjective} and
  substitute $D \bar{u}_{((i, \alpha), \gamma')}$ by $D \bar{u}_{\gamma'}$ in
  (\ref{eq:onebylsumi}):
  \begin{eqnarray}
    F & = & \frac{1}{l} \sum_{i = 1}^l \int_0^1 \left( \mathbbm{E}_{\gamma'}
    \int_{V_{(i, \alpha)} \cap W_{\gamma'}} d \Psi (D \bar{u}_{\gamma'})
    \right) \hspace{-0.25em} d \alpha \\
    & = & \frac{1}{l} \sum_{i = 1}^l \int_0^1 \left( \mathbbm{E}_{\gamma'}
    \int_{W_{\gamma'}} 1_{V_{(i, \alpha)}} d \Psi (D \bar{u}_{\gamma'})
    \right) \hspace{-0.25em} d \alpha . 
  \end{eqnarray}
  By definition of the measure-theoretic interior
  (\ref{eq:densityboundary}), the indicator function $1_{V_{(i, \alpha)}}$ is bounded from
  above by the density function $\Theta_{U_{(i, \alpha)}}$ of $U_{(i,
  \alpha)}$,
  \begin{eqnarray}
    1_{V_{(i, \alpha)}} (x) & \leqslant & \Theta_{(i, \alpha)} (x) \assign_{}
    \lim_{\delta \searrow 0} \frac{|\mathcal{B}_{\delta} (x) \cap U_{(i,
    \alpha)} |}{|\mathcal{B}_{\delta} (x) |}, 
  \end{eqnarray}
  which exists $\mathcal{H}^{d - 1}$-a.e.~on $\Omega$ by {\cite[Prop.
  3.61]{Ambrosio2000}}. Therefore, denoting by
  $\mathcal{B}_{\delta} (\cdummy)$ the mapping $x \in \Omega \mapsto
  \mathcal{B}_{\delta} (x)$,
  \begin{eqnarray}
    F & \leqslant & \frac{1}{l} \sum_{i = 1}^l \int_0^1 
    \mathbbm{E}_{\gamma'} \int_{W_{\gamma'}}  \lim_{\delta \searrow 0}
    \frac{| \mathcal{B}_{\delta} (\cdot) \cap U_{(i, \alpha)} |}{|
    \mathcal{B}_{\delta} (\cdot) |}  d \Psi (D \bar{u}_{\gamma'})
 d \alpha . \nn
  \end{eqnarray}
  Rearranging the integrals and the limit, which can be justified by $\tmop{TV} ( \bar{u}_{\gamma'}) < \infty$ almost surely
  and dominated convergence using (\ref{eq:psiupper}), we get
  \begin{eqnarray}
    F & \leqslant & \frac{1}{l}  \mathbbm{E}_{\gamma'} \lim_{\delta \searrow
    0} \int_{W_{\gamma'}} \sum_{i = 1}^l \int_0^1 \frac{|
    \mathcal{B}_{\delta} (\cdot) \cap U_{(i, \alpha)} |}{|
    \mathcal{B}_{\delta} (\cdot) |} d \alpha\  d \Psi (D
    \bar{u}_{\gamma'}) \nn\\
    & = & \frac{1}{l}  \mathbbm{E}_{\gamma'} \lim_{\delta \searrow 0}
    \int_{W_{\gamma'}} \frac{1}{| \mathcal{B}_{\delta} (\cdot) |} \cdot\\
    & & \qq\left(
    \sum_{i = 1}^l \int_0^1 \int_{\mathcal{B}_{\delta} (\cdummy)} 1_{\{u_i (y)
    \leqslant \alpha\}} dyd \alpha \right) d \Psi (D \bar{u}_{\gamma'})\,.\nn
  \end{eqnarray}
  We again apply {\cite[Prop. 1.78]{Ambrosio2000}} to the two innermost
  integrals (alternatively, use Fubini's theorem), which leads to
  \begin{eqnarray}
    F & \leqslant & \frac{1}{l}  \mathbbm{E}_{\gamma'} \lim_{\delta \searrow
    0} \int_{W_{\gamma'}} \frac{1}{| \mathcal{B}_{\delta} (\cdot) |} \cdot\\
    &&\qq\left( \sum_{i = 1}^l \int_{\mathcal{B}_{\delta} (\cdummy)} (1 - u_i (y)) dy
    \right) d \Psi (D \bar{u}_{\gamma'}) \hspace{0.25em} . 
  \end{eqnarray}
  Using the fact that $u (y) \in \Delta_l$, this collapses according to
  \begin{eqnarray}
    F & \leqslant & \frac{1}{l}  \mathbbm{E}_{\gamma'} \lim_{\delta \searrow
    0} \int_{W_{\gamma'}} \frac{1}{| \mathcal{B}_{\delta} (\cdot) |} \left(
    \int_{\mathcal{B}_{\delta} (\cdummy)} (l - 1) dy \right) d \Psi (D
    \bar{u}_{\gamma'}) \nn\\
    & = & \frac{1}{l}  \mathbbm{E}_{\gamma'} \lim_{\delta \searrow 0}
    \int_{W_{\gamma'}} (l - 1) d \Psi (D \bar{u}_{\gamma'}) \\
    & = & \frac{l - 1}{l}  \mathbbm{E}_{\gamma'} \int_{W_{\gamma'}} d \Psi (D
    \bar{u}_{\gamma'}) \\
    & = & \frac{l - 1}{l}  \mathbbm{E}_{\gamma'} \int_{V_{\gamma'}^{k - 2}
    \setminus V_{\gamma'}^{k - 1}} d \Psi (D \bar{u}_{\gamma'}) . 
  \end{eqnarray}
  Reverting the index shift and using $\bar{u}_{\gamma'} =
  \bar{u}_{\gamma}$ concludes the proof:
  \begin{eqnarray}
    F & \leqslant & \frac{l - 1}{l}  \mathbbm{E}_{\gamma} \int_{V_{\gamma}^{k
    - 1} \setminus V_{\gamma}^k} d \Psi (D \bar{u}_{\gamma}) \hspace{0.25em} .
    \label{eq:ffinalresult}
  \end{eqnarray}
\end{proof}

We are now ready to prove the main result, Thm.~\ref{thm:mainthm}, as stated
in the introduction.

\begin{proof}
  \textbf{(Theorem~\ref{thm:mainthm})} The fact that the algorithm
  provides $\bar{u} \in \mathcal{C}_{\mathcal{E}}$ almost surely follows from
  Thm.~\ref{thm:probterminates}. Therefore there almost surely exists $k'
  \assign k' (\gamma) \geqslant 1$ such that~$U^{k'} = \emptyset$ and
  $\bar{u}_{\gamma} = u_{\gamma}^{k'}$. On one hand, this implies
  \begin{eqnarray}
    & \langle \bar{u}_{\gamma}, s \rangle = \langle u_{\gamma}^{k'}, s
    \rangle = \lim_{k \rightarrow \infty} \langle u_{\gamma}^k, s \rangle & 
    \label{eq:someeqx1}
  \end{eqnarray}
  almost surely. On the other hand, $V^{k'} = (U^{k'})^1 = \emptyset$
  and therefore
  \begin{eqnarray}
    \bigcup_{k = 1}^{k'} V^{k - 1} \setminus V^k & \overset{(\ast)}{=} &
    \Omega \setminus V^{k'} = \Omega  \label{eq:someeqx2}
  \end{eqnarray}
  almost surely. The equality $(\ast)$ can be shown by induction: For the base
  case $k' = 1$, we have $V^0 = (U^0)^1 = (\Omega)^1 = \Omega$, since $\Omega$
  is the open unit box. For $k' \geqslant 2$,
  \begin{eqnarray}
    &&\bigcup_{k = 1}^{k'} V^{k - 1} \setminus V^k\\ & = & \left( V^{k' - 1}
    \setminus V^{k'} \right) \cup \bigcup_{k = 1}^{k' - 1} \left( V^{k - 1}
    \setminus V^k \right) \\
    & = & \left( V^{k' - 1} \setminus V^{k'} \right) \cup \left( \Omega
    \setminus V^{k' - 1} \right) \\
    & \overset{V^{k' - 1} \subseteq \Omega}{=} & \Omega \setminus V^{k' - 1}
    . 
  \end{eqnarray}
  almost surely (cf. (\ref{eq:omvkunion})). From (\ref{eq:someeqx1}) and
  (\ref{eq:someeqx2}) we obtain
  \begin{eqnarray}
    \mathbbm{E}_{\gamma} f ( \bar{u}_{\gamma}) & = & \mathbbm{E}_{\gamma}
    \left( \lim_{k \rightarrow \infty} \langle u_{\gamma}^k, s \rangle \right)+\cont\\
    &&\qq \mathbbm{E}_{\gamma} \left( \sum_{k = 1}^{\infty} \int_{V^{k - 1}
    \setminus V^k} d \Psi (D \bar{u}_{\gamma}) \right) \\
    & = & \lim_{k \rightarrow \infty} \left( \mathbbm{E}_{\gamma} \langle
    u_{\gamma}^k, s \rangle \right) +\cont\\
    &&\qq \sum_{k = 1}^{\infty}
    \mathbbm{E}_{\gamma} \int_{V^{k - 1} \setminus V^k} d \Psi (D
    \bar{u}_{\gamma})  \label{eq:swapswap}
  \end{eqnarray}
  The first term in (\ref{eq:swapswap}) is equal to $\langle u, s \rangle$ due
  to Prop.~\ref{prop:euks}. An induction argument using
  Prop.~\ref{prop:v0v1initial} and Prop.~\ref{prop:egamlm1l} shows that the
  second term can be bounded according to
  \begin{eqnarray}
    &&\sum_{k = 1}^{\infty} \mathbbm{E}_{\gamma} \int_{V^{k - 1} \setminus V^k}
    d \Psi (D \bar{u}_{\gamma}) \\ & \leqslant & \sum_{k = 1}^{\infty} \left(
    \frac{l - 1}{l} \right)^{k - 1} \frac{2}{l}  \frac{\lambda_u}{\lambda_l}
    \int_{\Omega} d \Psi (Du) \\
    & = & 2 \frac{\lambda_u}{\lambda_l} \int_{\Omega} d \Psi (Du)
    \hspace{0.25em}, 
  \end{eqnarray}
  therefore
  \begin{eqnarray}
    \mathbbm{E}_{\gamma} f ( \bar{u}_{\gamma}) & \leqslant & \langle u, s
    \rangle + 2 \frac{\lambda_u}{\lambda_l} \int_{\Omega} d \Psi (Du)
    \hspace{0.25em} . 
  \end{eqnarray}
  Since $s \geqslant 0$ and $\lambda_u \geqslant \lambda_l$, and
  therefore the linear term is bounded by $\langle u, s \rangle \leqslant 2 (\lambda_u / \lambda_l) \langle
  u, s \rangle$, this proves the assertion. Swapping the integral and limit
  in (\ref{eq:swapswap}) can be justified retrospectively by the dominated
  convergence theorem, using $0 \leqslant \langle u, s \rangle \leqslant
  \infty$ and $\int_{\Omega} d \Psi (Du) < \infty$ due to the
  upper-boundedness condition and
  Prop.~\ref{prop:psimorebounds}.{\qed}
\end{proof}

Corollary~\ref{cor:maincor} (see introduction) follows immediately using $f
(u^{\ast}) \leqslant f (u_{\mathcal{E}}^{\ast})$,
cf.~(\ref{eq:proboptimalityfromcoarea}). We have demonstrated that the
proposed approach allows to recover, from the solution $u^{\ast}$ of the
convex {\tmem{relaxed}} problem (\ref{eq:problemrelaxed}), an approximate
\tmtextit{integral} solution $\bar{u}^{\ast}$ of the nonconvex
{\tmem{original}} problem (\ref{eq:combprob}) with an upper bound on the
objective.

For the specific case $\Psi = \Psi_d$, we have

\begin{proposition}\label{prop:dijprop}
  Let $d : \mathcal{I}^2 \rightarrow \mathbbm{R}_{\geqslant 0}$ be a metric
  and $\Psi = \Psi_d$. Then one may set
  \begin{eqnarray*}
    \lambda_u = \max_{i, j \in \{1, \ldots, l\}} d (i, j) & \text{and} &
    \lambda_l = \min_{i \neq j} d (i, j) .
  \end{eqnarray*}
\end{proposition}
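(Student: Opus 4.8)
The proposition amounts to verifying, for the particular regularizer $\Psi_d$ of \eqref{eq:chambreg}, the two structural hypotheses of Theorem~\ref{thm:mainthm}: the upper bound \eqref{eq:psiupper} with $\lambda_u = \max_{i,j} d(i,j)$ and the lower bound \eqref{eq:psilower} with $\lambda_l = \min_{i \neq j} d(i,j)$. The plan is to read off both bounds directly from the support-function representation $\Psi_d(z) = \sup_{v \in \mathcal{D}_{\tmop{loc}}^d} \langle z, v \rangle$; the upper bound is immediate, while the lower bound requires exhibiting one explicit dual-feasible $v$.

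For \eqref{eq:psiupper} I would fix labels $i, j$ and a unit vector $\nu$, and note that for \emph{every} $v \in \mathcal{D}_{\tmop{loc}}^d$ one has $\langle \nu (e^i - e^j)^\top, v \rangle = \langle \nu, v^i - v^j \rangle \leqslant \| \nu \|_2 \, \| v^i - v^j \|_2 \leqslant d(i,j) \leqslant \lambda_u$ by Cauchy--Schwarz and the defining constraint of $\mathcal{D}_{\tmop{loc}}^d$; taking the supremum over $v$ gives $\Psi_d(\nu (e^i - e^j)^\top) \leqslant \lambda_u$, and the case $i=j$ is trivial since $\Psi_d(0)=0$. (Alternatively, \eqref{eq:psidsatisfies} applied with the roles of $i$ and $j$ interchanged gives $\Psi_d(\nu(e^i - e^j)^\top) = d(j,i) = d(i,j) \leqslant \lambda_u$ directly, using the symmetry of $d$.) Note also that $\max_{i,j} d(i,j) = \max_{i \neq j} d(i,j)$ and $\lambda_l > 0$ because $d$ is a metric, so both hypotheses of Theorem~\ref{thm:mainthm} are genuinely in force.

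For \eqref{eq:psilower}, given $z = (z^1, \ldots, z^l)$ with $\sum_i z^i = 0$, I would construct a feasible $v$ attaining $\langle z, v \rangle = \tfrac{\lambda_l}{2} \sum_i \| z^i \|_2$. Set $\hat v^i \assign \tfrac{\lambda_l}{2}\, z^i / \| z^i \|_2$ when $z^i \neq 0$ and $\hat v^i \assign 0$ otherwise; then $\langle z, \hat v \rangle = \sum_i \langle z^i, \hat v^i \rangle = \tfrac{\lambda_l}{2} \sum_i \| z^i \|_2$, and $\| \hat v^i \|_2 \leqslant \lambda_l / 2$ for every $i$, so $\| \hat v^i - \hat v^j \|_2 \leqslant \lambda_l \leqslant d(i,j)$ for $i \neq j$ (while for $i=j$ the constraint reads $0 \leqslant d(i,i) = 0$). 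The only constraint $\hat v$ may violate is $\sum_k \hat v^k = 0$, which I would repair by passing to $v^i \assign \hat v^i - \bar v$ with $\bar v \assign \tfrac1l \sum_k \hat v^k$. This mean shift leaves every difference $v^i - v^j = \hat v^i - \hat v^j$ unchanged, hence preserves the pairwise constraints while enforcing $\sum_k v^k = 0$; and it changes $\langle z, \cdot \rangle$ by $\langle \sum_i z^i, \bar v \rangle = 0$, so $\langle z, v \rangle = \langle z, \hat v \rangle$. Thus $v \in \mathcal{D}_{\tmop{loc}}^d$ and $\Psi_d(z) \geqslant \langle z, v \rangle = \tfrac{\lambda_l}{2} \sum_i \| z^i \|_2$, which is \eqref{eq:psilower}.

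The one point that needs care --- and the only obstacle worth flagging --- is precisely this zero-sum constraint: the obvious rescaled-direction choice $\hat v$ need not sum to zero, and the fix works only because both the pairwise constraints defining $\mathcal{D}_{\tmop{loc}}^d$ and the functional $z \mapsto \langle z, \cdot\rangle$, restricted to $z$ with $\sum_i z^i = 0$, are invariant under adding a common vector to all blocks $v^i$. Everything else is elementary.
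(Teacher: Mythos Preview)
Your proof is correct and follows essentially the same approach as the paper's: for the upper bound you invoke \eqref{eq:psidsatisfies} (the paper does only this, omitting your Cauchy--Schwarz alternative), and for the lower bound you construct the same dual-feasible element --- normalized columns scaled by $\lambda_l/2$, then mean-shifted onto the zero-sum subspace --- which the paper writes compactly as $v = v'(I - \tfrac{1}{l}ee^\top)$. Your handling of the case $z^i = 0$ is slightly more explicit than the paper's, but the arguments are otherwise identical.
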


\begin{proof}
  From the remarks in the introduction we obtain (cf.~\cite{Lellmann2011c})
  \begin{eqnarray*}
    \Psi_d (\nu (e^i - e^j)) & = & d (i, j),
  \end{eqnarray*}
  which shows the upper bound. For the lower bound, set $c \assign \min_{i
  \neq j} d (i, j)$, $v'^i \assign \frac{c}{2}  \frac{w^i}{\|w^i \|_2}$ and $v
  \assign v'  (I - \frac{1}{l} ee^{\top}) $. Then $v \in \mathcal{D}_{\tmop{loc}}^d$, since $\|v^i - v^j \|_2 = \|v'^i -
  v'^j \|_2 \leqslant c$ and
  $ve = v'  (I - \frac{1^{}}{l} ee^{\top}) e = 0$.
  Therefore, for $w \in
  \mathbbm{R}^{d \times l}$ satisfying $we = 0$,
  \begin{eqnarray}
    \Psi_d (w) & \geqslant & \langle w, v \rangle = \langle w, v' \rangle\\
    & =&
    \sum_{i = 1}^l \langle w^i, \frac{c}{2}  \frac{w^i}{\|w^i \|_2} \rangle =
    \frac{c}{2}  \sum_{i = 1}^l \|w^i \|_2, 
  \end{eqnarray}
  proving the lower bound.
\end{proof}

Finally, for $\Psi_d$ we obtain the factor
\begin{eqnarray}
  &  & 2 \frac{\lambda_u}{\lambda_l} \hspace{1em} = \hspace{1em} 2
  \frac{\max_{i, j} d (i, j)}{\min_{i \neq j} d (i, j)}, 
\end{eqnarray}
determining the optimality bound, as claimed in the introduction (\ref{eq:psidclaim}). The bound in
(\ref{eq:efubarstar}) is the same as the known bounds for finite-dimensional
metric labeling {\cite{Kleinberg1999}} and $\alpha$-expansion
{\cite{Boykov2001}}, however it extends these results to problems on
continuous domains for a broad class of regularizers.

\section{Conclusion}
In this work we considered a method for recovering approximate
solutions of image partitioning problems from solutions of a convex relaxation.
We proposed a probabilistic rounding method motivated by the finite-dimensional framework,
and showed that it is possible to obtain \emph{a priori} bounds
on the optimality of the integral solution obtained by rounding a solution of the convex relaxation.

The obtained bounds are compatible with known bounds for the finite-dimensional setting.
However, to our knowledge, this is the first fully convex approach that is both formulated in the spatially continuous setting and provides a true \tmtextit{a priori} bound. We showed that the approach can also be interpreted as an
approximate variant of the coarea formula.

While the results apply to a quite general class of regularizers, they are formulated for the homogeneous case. Non-homogeneous regularizers constitute an interesting direction for future work. In particular, such regularizers naturally occur when 
applying convex relaxation techniques {\cite{Alberti2003,Pock2010}} in order to solve nonconvex variational problems.

With the increasing computational power, such techniques have become quite popular recently. For problems where the convexity is confined to the data term, they permit to find a global minimizer. A proper extension of the results outlined in this work may provide a way to find good approximate solutions of problems where also the \emph{regularizer} is nonconvex.

\section{Appendix}

\begin{proof}[Prop.~\ref{prop:psimorebounds}]
  In order to prove the first assertion (\ref{eq:psinubnd}), note that the
  mapping $w \mapsto \Psi (\nu w^{\top})$ is convex, therefore it must assume
  its maximum on the polytope $\Delta_l - \Delta_l \assign \{z^1 - z^2 |z^1,
  z^2 \Delta_l \}$ in a vertex of the polytope.. Since the polytope $\Delta_l - \Delta_l$ is
  the difference of two polytopes, its vertex set is at most the difference of
  their vertex sets, $V \assign \{e^i - e^j |i, j \in \{1, \ldots, l\}\}$. On
  this set, the bound $\Psi (\nu w^{\top}) \leqslant \lambda_u$ holds for $w \in V$ due to the upper-boundedness condition (\ref{eq:psiupper}), which proves~(\ref{eq:psinubnd}).
  
  The second equality (\ref{eq:psiwbnd}) follows from the fact that $G \assign
  \{b^{i k} \assign e^k (e^i - e^{i + 1})^{\top} |1\leqslant k \leqslant d, 1\leqslant i\leqslant l-1\}$ is a basis of the linear subspace $W$,
  satisfying $\Psi (b^{i k}) \leqslant \lambda_u$, and $\Psi$ is positively
  homogeneous and convex, and thus subadditive. Specifically, there is a
  linear transform $T : W \rightarrow \mathbbm{R}^{d \times (l - 1)}$ such
  that $w = \sum_{i, k} b^{i k} \alpha_{i k}$ for $\alpha = T (w)$. Then
  \begin{eqnarray}
    \Psi (w) & = & \Psi \left( \sum_{i, k} b^{i k} \alpha_{i k}\right) \\
    & \leqslant & \Psi \left( \sum_{i k} | \alpha_{i k} | \tmop{sgn} (\alpha_{i k}) b^{i k}\right) \\
    & \leqslant & \sum_{i k} | \alpha_{i k} | \Psi \left(\tmop{sgn} (\alpha_{i k}) b^{i k}\right)\,.
  \end{eqnarray}
  Since
  (\ref{eq:psiupper}) provides $\Psi (\pm b^{i k}) \leqslant \lambda_u$, we
  obtain
  \begin{equation}
  \Psi (w) \leqslant \lambda_u  \sum_{i k} | \alpha_{i k} | \leqslant
  \lambda_u  \|T\|  \|w\|_2
  \end{equation}
  for some suitable operator norm $\| \cdot \|$ and
  any $w \in W$.
\end{proof}

\begin{proof}[Prop.~\ref{prop:intcap}]
  We prove mutual inclusion:

    $'' \subseteq''$: From the definition of the measure-theoretic
    interior,
    \begin{eqnarray}
      x \in (E \cap F)^1 & \Rightarrow & \lim_{\delta \searrow 0}
      \frac{|\mathcal{B}_{\delta} (x) \cap E \cap F|}{|\mathcal{B}_{\delta}
      (x) |} = 1. 
    \end{eqnarray}
    Since $|\mathcal{B}_{\delta} (x) | \geqslant |\mathcal{B}_{\delta} (x)
    \cap E| \geqslant |\mathcal{B}_{\delta} (x) \cap E \cap F|$ (and vice
    versa for $|\mathcal{B}_{\delta} (x) \cap F|$), it follows by the
    ``sandwich'' criterion that both $\lim_{\delta \searrow 0}
    |\mathcal{B}_{\delta} (x) \cap E| / |\mathcal{B}_{\delta} (x) |$ and
    $\lim_{\delta \searrow 0} |\mathcal{B}_{\delta} (x) \cap F| /
    |\mathcal{B}_{\delta} (x) |$ exist and are equal to~$1$, which shows $x
    \in E^1 \cap F^1$.
    
    $'' \supseteq''$: Assume that $x \in E^1 \cap F^1$. Then
    \begin{eqnarray}
      1 & \geqslant & \lim_{\delta \searrow 0} \sup
      \frac{|\mathcal{B}_{\delta} (x) \cap E \cap F|}{|\mathcal{B}_{\delta}
      (x) |} \\
      & \geqslant & \lim_{\delta \searrow 0} \inf \frac{|\mathcal{B}_{\delta}
      (x) \cap E \cap F|}{|\mathcal{B}_{\delta} (x) |} \\
      & = & \lim_{\delta \searrow 0} \inf \frac{|\mathcal{B}_{\delta} (x)
      \cap E| + |\mathcal{B}_{\delta} \cap F| - |\mathcal{B}_{\delta} \cap (E
      \cup F) |}{|\mathcal{B}_{\delta} (x) |}. \nn
    \end{eqnarray}
    We obtain equality,
    \begin{eqnarray}
      1 & \geqslant & \lim_{\delta \searrow 0} \inf \frac{|\mathcal{B}_{\delta}
      (x) \cap E \cap F|}{|\mathcal{B}_{\delta} (x) |}\\
      & \geqslant & \lim_{\delta \searrow 0} \inf \frac{|\mathcal{B}_{\delta}
      (x) \cap E|}{|\mathcal{B}_{\delta} (x) |} + \lim_{\delta \searrow 0}
      \inf \frac{|\mathcal{B}_{\delta} (x) \cap F|}{|\mathcal{B}_{\delta} (x)
      |} +\cont\\
      & & \qq \lim_{\delta \searrow 0} \inf \left( - \frac{|\mathcal{B}_{\delta}
      \cap (E \cup F) |}{|\mathcal{B}_{\delta} (x) |} \right) \\
      & = & 2 - \underbrace{\lim_{\delta \searrow 0} \sup
      \frac{|\mathcal{B}_{\delta} \cap (E \cup F) |}{|\mathcal{B}_{\delta} (x)
      |}}_{\leqslant 1} \geqslant 1, 
    \end{eqnarray}
    from which we conclude that
    \begin{equation*}
      \lim_{\delta \searrow 0} \sup \frac{|\mathcal{B}_{\delta} (x) \cap E
      \cap F|}{|\mathcal{B}_{\delta} (x) |} = \lim_{\delta \searrow 0} \inf
      \frac{|\mathcal{B}_{\delta} (x) \cap E \cap F|}{|\mathcal{B}_{\delta}
      (x) |} = 1,
    \end{equation*}
    i.e.,~$x \in \left( E \cap F \right)^1$.
\end{proof}

\begin{proof}[Prop.~\ref{prop:bvcomposed}]
  First note that
  \begin{eqnarray}
    &  & \int_{\mathcal{F}E \cap \Omega} \|w^+_{\mathcal{F}E} -
    w^-_{\mathcal{F}E} \|_2 d\mathcal{H}^{d - 1} \\
    & \leqslant & \sup \{\|w^+_{\mathcal{F}E} (x) - w^-_{\mathcal{F}E}
    (x)\|_2 | x \in \mathcal{F}E \cap \Omega\} \mcont\\
    & & \qq\mathcal{H}^{d - 1}
    (\mathcal{F}E \cap \Omega) \\
    & \overset{(\ast)}{\leqslant} & \sup \{\|w (x) - w (y)\|_2 | x, y \in
    \Omega\} \cdot \tmop{TV} (1_E) \\
    & \overset{(\ast\ast)}{\leqslant} & \sqrt{2} \tmop{TV}
    (1_E) \\
    & = & \sqrt{2} \tmop{Per} (E) < \infty .  \label{eq:gs2per}
  \end{eqnarray}
  The inequality $(\ast)$ is a consequence of the definition of
  $w^{\pm}_{\mathcal{F}E}$ and {\cite[Thm.~3.59]{Ambrosio2000}}, and $(\ast\ast)$ follows directly from $w(x),w(y)\in\Delta_l$.
  The upper bound \eqref{eq:gs2per} permits applying {\cite[Thm.~3.84]{Ambrosio2000}} on $w$, which provides $w
  \in \tmop{BV} (\Omega)^l$ and (\ref{eq:dwedue}). Due to
  {\cite[Prop.~3.61]{Ambrosio2000}}, the sets $(E)^0,
  (E)^1$ and $\mathcal{F}E$ form a (pairwise disjoint) partition of $\Omega$,
  up to an $\mathcal{H}^{d - 1}$-zero set. Moreover, since $\Psi (D u) \ll |D
  u| \ll \mathcal{H}^{d - 1}$ by construction, we have, for some Borel set
  $A$,
  \begin{eqnarray}
    & & \int_A \Psi (D w)\\
    & = & \int_{A \cap (E)^1} d \Psi (D w) + \int_{A \cap
    (E)^0} d \Psi (D w) +\\
    &  & \int_{A \cap \mathcal{F}E \cap \Omega} \Psi \left( \nu_E  \left(
    w^+_{\mathcal{F}E} (x) - w^-_{\mathcal{F}E} (x) \right) ^{\top} \right)
    d\mathcal{H}^{d - 1}\nn\\
    & \overset{(\ast \ast)}{\leqslant} & \int_{A \cap (E)^1} d \Psi (D w) +
    \int_{A \cap (E)^0} d \Psi (D w) + \nn\\
    &  & \int_{A \cap \mathcal{F}E \cap \Omega} \lambda_u d\mathcal{H}^{d -
    1} \\
    & \overset{( \ref{eq:gs2per})}{\leqslant} & \int_{A \cap (E)^1} d \Psi (D
    w) + \int_{A \cap (E)^0} d \Psi (D w) + \nn\\
    & & \lambda_u \tmop{Per} (E) .
    \label{eq:intacape1}
  \end{eqnarray}
  The inequality $(\ast \ast)$ holds due to the upper boundedness and
  Prop.~\ref{prop:psimorebounds}. From {\cite[Prop. 2.37]{Ambrosio2000}} we
  obtain that $\Psi$ is additive on mutually singular Radon measures $\mu,
  \nu$, i.e., if $| \mu | \bot | \nu |$, then
  \begin{eqnarray}
    \int_B d \Psi (\mu + \nu) = \int_B d \Psi (\mu) + \int_B d \Psi (\nu) \hspace{1em}\label{eq:munuadd}
  \end{eqnarray}
  for any Borel set $B \subseteq \Omega$.
  Substituting $D w$ in (\ref{eq:intacape1}) according to (\ref{eq:dwedue})
  and using the fact that the three measures that form $D w$ in
  (\ref{eq:dwedue}) are mutually singular, the additivity property
  (\ref{eq:munuadd}) leads to the remaining assertion,
  \begin{eqnarray}
    &&\int_A d \Psi (D w) \leqslant \\
    &&\quad\int_{A \cap (E)^1} d \Psi (D u) +
    \int_{A \cap (E)^0} d \Psi (D v) + \lambda_u \tmop{Per} (E)\,.\nn
  \end{eqnarray}
\end{proof}

\begin{proof}[Prop.~\ref{prop:interiorsameobjective}]
  We first show (\ref{eq:ureeqvre}). It suffices to show that
  \begin{eqnarray}
    \left\{ x \in (E)^1 \right. & \Leftrightarrow & \left. x \in E \right\}
    \hspace{1em} \text{\tmop{for~}} \mathcal{L}^d \text{-a.e.~} x \in \Omega . 
    \label{eq:e1eqe}
  \end{eqnarray}
  This can be seen by considering the precise representative $\widetilde{1_E}$
  of $1_E$ {\cite[Def.~3.63]{Ambrosio2000}}: Starting with the definition,
  \begin{eqnarray}
    x \in (E)^1 & \Leftrightarrow & \lim_{\delta \searrow 0} \frac{|E \cap
    \mathcal{B}_{\delta} (x) |}{|\mathcal{B}_{\delta} (x) |} = 1\,,
    \end{eqnarray}
    the fact that  $\lim_{\delta \searrow 0} \frac{| \Omega \cap \mathcal{B}_{\delta} (x) |}{|\mathcal{B}_{\delta} (x) |} = 1$ implies
    \begin{eqnarray}
     x \in (E)^1 & \Leftrightarrow & \lim_{\delta \searrow 0} \frac{| (\Omega \setminus
    E) \cap \mathcal{B}_{\delta} (x) |}{|\mathcal{B}_{\delta} (x) |} = 0 \\
    & \Leftrightarrow & \lim_{\delta \searrow 0}
    \frac{1}{|\mathcal{B}_{\delta} (x) |} \int_{\mathcal{B}_{\delta} (x)} |1_E
    - 1| d y = 0 \\
    & \Leftrightarrow & \widetilde{1_E} (x) = 1. 
  \end{eqnarray}
  Substituting $E$ by $\Omega \setminus E$, the same equivalence shows that $x
  \in (E)^0 \Leftrightarrow \widetilde{1_{\Omega \setminus E}} (x) = 1
  \Leftrightarrow \widetilde{1_E} (x) = 0$. As $\mathcal{L}^d (\Omega
  \setminus ((E)^0 \cup (E)^1)) = 0$, this shows that $1_{E^1} =
  \widetilde{1_E}$ $\mathcal{L}^d$-a.e. Using the fact that $\widetilde{1_E} =
  1_E$ {\cite[Prop. 3.64]{Ambrosio2000}}, we conclude that $1_{(E)^1} = 1_E$
  $\mathcal{L}^d$-a.e., which proves (\ref{eq:e1eqe}) and therefore the
  assertion (\ref{eq:ureeqvre}).
  
  Since the measure-theoretic interior $(E)^1$ is defined over
  $\mathcal{L}^d$-integrals, it is invariant under $\mathcal{L}^d$-negligible
  modifications of $E$. Together with (\ref{eq:e1eqe}) this implies
  \begin{eqnarray}
    ((E)^1)^1 = (E)^1, \; \mathcal{F} (E)^1 =\mathcal{F}E,\;((E)^1)^0 = (E)^0\,.\label{eq:e11eqe1}
  \end{eqnarray}
  To show the relation $(D u) \llcorner (E)^1 = (D v) \llcorner (E)^1$,
  consider
  \begin{eqnarray}
    D u \llcorner (E)^1 & = & D \left( 1_{\Omega \setminus (E)^1} u +
    1_{(E)^1} u \right) \llcorner (E)^1 \\
    & \overset{(\ast)}{=} & D \left( 1_{\Omega \setminus (E)^1} u + 1_{(E)^1}
    v \right) \llcorner (E)^1 .  \label{eq:doneomsubse1}
  \end{eqnarray}
  The equality $(\ast)$ holds due to the assumption (\ref{eq:assue1}), and due
  to the fact that $D f = D g$ if $f = g$ $\mathcal{L}^d$-a.e. (see,
  e.g.,~{\cite[Prop. 3.2]{Ambrosio2000}}). We continue from
  (\ref{eq:doneomsubse1}) via
  \begin{eqnarray}
    & &  D u \llcorner (E)^1 \\
    & \overset{\tmop{Prop}. \ref{prop:bvcomposed}}{=} &
    \{D u \llcorner ((E)^1)^0 + D v \llcorner ((E)^1)^1 + \\
    &  & \nu_{(E)^1}  \left( u^+_{\mathcal{F}E^1} - v^-_{\mathcal{F}E^1}
    \right)^{\top} \mathcal{H}^{d - 1}_{} \llcorner (\mathcal{F}(E)^1 \cap
    \Omega) \} \llcorner (E)^1 \nonumber\\
    & \overset{( \ref{eq:e11eqe1})}{=} & \left( D u \llcorner (E)^0 + D v
    \llcorner (E)^1 \right) \llcorner \left( E \right)^1 + \\
    &  & \left( \nu_{(E)^1} \left( u^+_{\mathcal{F}E^1} -
    v^-_{\mathcal{F}E^1} \right)^{\top} \mathcal{H}^{d - 1}_{} \llcorner
    (\mathcal{F}E \cap \Omega) \right) \llcorner (E)^1 \nonumber\\
    & = & D u \llcorner \left( (E)^0 \cap (E)^1 \right) + D v \llcorner
    \left( (E)^1 \cap (E)^1 \right) + \\
    &  & \nu_{(E)^1} \left( u^+_{\mathcal{F}E^1} - v^-_{\mathcal{F}E^1}
    \right)^{\top} \mathcal{H}^{d - 1}_{} \llcorner (\mathcal{F}E \cap \Omega
    \cap (E)^1) \nonumber\\
    & = & D v \llcorner (E)^1 .  \label{eq:dudveq}
  \end{eqnarray}
  Therefore $D u \llcorner (E)^1 = D v \llcorner (E)^1$. Then,
  \begin{eqnarray}
    \Psi (D u) \llcorner (E)^1 & = & \Psi (D u \llcorner (E)^1 + \cont\\
    &&\quad D u \llcorner
    (\Omega \setminus (E)^1)) \llcorner (E)^1 \\
    & \overset{(\ast)}{=} & \Psi \left( D u \llcorner (E)^1 \right) \llcorner
    (E)^1 +\cont\\
    &&\quad \Psi \left( D u \llcorner (\Omega \setminus (E)^1) \right)
    \llcorner (E)^1 .  \label{eq:inte1psi}
  \end{eqnarray}
  In the equality $(\ast)$ we used the additivity of $\Psi$ on mutually
  singular Radon measures {\cite[Prop.~2.37]{Ambrosio2000}}. By definition of
  the total variation, $| \mu \llcorner A| = | \mu | \llcorner A$ holds for
  any measure~$\mu$, therefore $|D u \llcorner (\Omega \setminus (E)^1) | = |D
  u| \llcorner (\Omega \setminus (E)^1)$ and $|D u \llcorner (\Omega \setminus
  (E)^1) | ((E)^1) = 0$, which together with (again by definition) $\Psi (\mu)
  \ll | \mu |$ implies that the second term in~(\ref{eq:inte1psi}) vanishes.
  Since all observations equally hold for $v$ instead of $u$, we conclude
  \begin{eqnarray}
    \Psi (D u) \llcorner (E)^1 & = & \Psi (D u \llcorner (E)^1) \llcorner (E)^1\\
    & \overset{( \ref{eq:dudveq})}{=} & \Psi (D v \llcorner (E)^1) \llcorner (E)^1\\
    & = & \Psi (D v) \llcorner (E)^1 .
  \end{eqnarray}
  Equation (\ref{eq:ie1psi}) follows immediately.
\end{proof}


\bibliographystyle{spmpsci}      
\bibliography{references}

\end{document}